\numberwithin{equation}{section}
\theoremstyle{plain}
\theoremstyle{remark}
\newtheorem{theorem}{Theorem} 
\newtheorem{lemma}[theorem]{Lemma} 
\newtheorem{proposition}[theorem]{Proposition} 
\newtheorem{corollary}[theorem]{Corollary}
\newtheorem{condition}{Sufficient Condition} 
\newtheorem{example}[theorem]{Example}
\newcommand{\Prob}[1]{\ensuremath{{\mathbb P} \left[#1\right]}} 
\newcommand{\E}{\ensuremath{\mathbb E}} 
\newcommand{\R}{\ensuremath{\mathbb R}}
\newcommand{\K}{\ensuremath{\mathcal K}}
\newcommand{\cN}{\ensuremath{\mathcal N}}
\newcommand{\Y}{\ensuremath{\mathcal Y}}
\newcommand{\tr}{\ensuremath{{\scriptscriptstyle\mathsf{T}}}}
\newcommand{\inner}[1]{\left\langle #1 \right\rangle}
\newcommand{\reals}{\ensuremath{\mathbb R}}
\newcommand{\norm}[1]{\ensuremath{\left\|#1\right\|}}
\newcommand{\sig}[2]{\ensuremath{\sigma_{{\scriptscriptstyle #1,#2}}}}
\def\deq{\triangleq}
\newcommand{\Ps}{\ensuremath{\mathsf P}} 
\newcommand{\loss}{\ensuremath{\boldsymbol \ell}} 
\newcommand{\Reg}{\ensuremath{\text{Reg}}} 
\newcommand{\ra}{\ensuremath{\rightarrow}}
\newcommand{\p}{\mathbb{P}}
\newcommand{\LL}{\mathcal L} 
\newcommand{\MM}{\mathcal M}
\def\eps{{\epsilon}}
\def\g{G}
\begin{document}

	\begin{frontmatter}
	\title{Efficient Sampling from Time-Varying Log-Concave Distributions}
	\runtitle{Sampling from Time-Varying Distributions}

	\begin{aug}
	\author{\fnms{Hariharan} \snm{Narayanan}\ead[label=e2]{harin@uw.edu}}
	\and
	\author{\fnms{Alexander} \snm{Rakhlin}\thanksref{t1}\ead[label=e1]{rakhlin@wharton.upenn.edu}}

	\thankstext{t1}{Supported in part by NSF under grant CAREER DMS-0954737. }

	\affiliation{University of Washington and University of Pennsylvania}

	\address{Department of Statistics and Department of Mathematics\\
	University of Washington\\
	\printead{e2}\\
	\phantom{E-mail:\ harin@uw.edu}
	}

	\address{Department of Statistics, The Wharton School\\
	University of Pennsylvania\\
	\printead{e1}\\
	\phantom{E-mail:\ rakhlin@wharton.upenn.edu}
	}
	\end{aug}

	\begin{abstract}
		We propose a computationally efficient random walk on a convex body which rapidly mixes and closely tracks a time-varying log-concave distribution. We develop general theoretical guarantees on the required number of steps; this number can be calculated on the fly according to the distance from and the shape of the next distribution. We then illustrate the technique on several examples. Within the context of exponential families, the proposed method produces samples from a posterior distribution which is updated as data arrive in a streaming fashion. The sampling technique can be used to track time-varying truncated distributions, as well as to obtain samples from a changing mixture model, fitted in a streaming fashion to data. In the setting of linear optimization, the proposed method has oracle complexity with best known dependence on the dimension for certain geometries. In the context of online learning and repeated games, the algorithm is an efficient method for implementing no-regret mixture forecasting strategies. Remarkably, in some of these examples, only one step of the random walk is needed to track the next distribution.
	\end{abstract}

	\begin{keyword}[class=AMS]
	\kwd[Primary ]{60K35}
	\kwd{60K35}
	\kwd[; secondary ]{60K35}
	\end{keyword}

	\begin{keyword}
	\kwd{sample}
	\kwd{\LaTeXe}
	\end{keyword}

	\end{frontmatter}

\section{Introduction}

Let $\K$ be a compact convex subset of $\reals^d$ with non-empty interior. Let $\mu_0,\ldots,\mu_t,\ldots$ be a sequence of probability measures with support on $\K$. Suppose each probability distribution $\mu_t$ has a density
\begin{align}
	\label{eq:def_mu_t}
	\frac{d\mu_t(x)}{dx} = \frac{e^{-s_t(x)} }{Z_t}
~\mbox{,}~~~~~
Z_t = \int_{x\in\K} e^{-s_t(x)} dx
\end{align}
with respect to the Lebesgue measure, where each $s_t(x)$ is a convex function on $\K$. This paper proposes a Markov Chain Monte Carlo method for sequentially sampling from these distributions. The method comes with strong mixing time guarantees, and is shown to be applicable to a variety of problems. Observe that, by definition, the distributions $\mu_t$ are \emph{log-concave}, and thus our work falls within the emerging body of literature on sampling from log-concave distributions.

The problem of sampling from distributions arises in many areas of statistics, most notably in Bayesian inference \cite{robert2004monte}. In particular,  Sequential Monte Carlo methods \cite{doucet2001sequential} aim to sample from time-varying distributions. The need for such methods arises, for instance, in the case of online arrival of data: it is desirable to be able to update the posterior distribution at a low computational cost. If the distributions are changing ``slowly'' with time, sequential methods can re-use samples from the previous distribution and perform certain re-weighting to track the next distribution, thus saving computational resources. These ideas are exploited in particle filtering methods (see \cite{chopin2002sequential,doucet2001sequential} and references therein). Beyond Bayesian inference, other applications of sampling from distributions include simulated annealing, global optimization, and regret minimization.

The main critique of the MCMC methods is, in many situations, the lack of mixing time analysis. In practice, the number of steps of the chain required to obtain an honest sample from a distribution is mostly calculated based on heuristics. There is a growing body of literature that presents exceptions to these heuristic approaches. Coupling methods, spectral gap methods, as well as the more recent study of positive Ricci curvature, yield geometric decrease of the distance to the desired stationary distribution -- a property known as \emph{geometric ergodicity}. The most well-understood cases in this context are those with a finite or countable state space (see \cite{meyn2009markov, diaconis2009markov}). In contrast, we are interested in a random walk on a non-discrete set.

This paper is focused on a particular circle of problems defined via log-concave distributions. These distributions constitute an important subset of the set of unimodal distributions, a fact that has been recognized within Statistics (see e.g. \cite{walther2009inference}). We are not the first to study mixing times for such distributions: this line of work started with the breakthrough paper of \cite{DyeFriKan91}, followed by a series of improvements \cite{frieze1994sampling,Lovasz,LovVem06simulated,LovVem07geometry}. However, the recent advances in \cite{kannan2012random} on sampling from convex bodies give an edge to obtaining stronger guarantees. In particular, we show that we can provably track a changing distribution with a small number (or even only \emph{one step}) of a random walk, provided that the distribution changes slowly enough. Such a result seems out of reach with other random walk methods due to the lack of scale-free bounds on conductance. Interestingly, the idea of tracking a changing distribution with only one step parallels the technique of following a central path in the theory of interior point methods for optimization. 

We assume that we can compute a {\em self-concordant barrier} (see Section~\ref{sec:apps} and Appendix~\ref{sec:self_conc_def}) for the set $\K$, a requirement that is satisfied in many cases of interest. For instance, the self-concordant barrier can be readily computed in closed form if $\K$ is defined via linear and quadratic constraints. While the availability of the barrier is a stronger assumption than, for instance, access to a separation oracle for $\K$, the barrier gives a better handle on the geometry of the space and yields fast mixing of the Markov chain.

In Section~\ref{sec:apps}, we illustrate the method within several diverse application domains. As one of the examples, we consider the problem of updating the posterior with respect to a conjugate prior in an exponential family, where the parameter is taking values in a space of a fixed dimensionality given by the sufficient statistics. The constraints then constitute a prior knowledge about the possible location of the parameter. As another example, we consider sampling from a time-varying truncated distribution, as well as the extension to sampling from mixture models fitted to streaming data.
We employ the sampling technique to the classical problem of linear optimization via simulated annealing. The final example concerns the problem of regret minimization where the log-concave distribution arises naturally from the exponential weighting scheme.

The paper is organized as follows. In the next section we study the geometry of the set $\K$ induced by a self-concordant barrier and prove a key isoperimetric inequality in the corresponding Riemannian metric. The Markov chain for a given log-concave distribution is defined in Section~\ref{sec:mixing}. Conditions on the size of a step are introduced in Section~\ref{sec:step_size_cond}, and a lower bound on the conductance of the chain is proved in Section~\ref{sec:cond}. Section~\ref{sec:tracking} contains main results about tracking time-varying distributions given appropriate measures of change between time steps. Section~\ref{sec:apps} is devoted to applications. Finally, Sections~\ref{sec:proofs} and \ref{sec:variation} contain all the remaining proofs.

\section{Geometry Induced by the Self-Concordant Barrier}

The Markov chain studied in this paper uses as a proposal a Gaussian distribution with a covariance that approximates well the local geometry of the set $\K$ at the current point. This local geometry plays a crucial role in the theory of interior point methods for optimization, yet for our purposes a handle on the local geometry yields a good lower bound on \emph{conductance} of the Markov chain. Further intriguing similarities between optimization and sampling will be pointed out throughout the paper. 

We refer to \cite{Nemirovski04lectures} for an introduction to the theory of interior point methods, a subject centered around the notion of a self-concordant barrier. Once we have defined a self-concordant barrier for $\K$, the local geometry is defined through the Hessian of the barrier at the current point. To be more precise, for any function $F$ on the interior $int(\K)$ having continuous derivatives of order $k$, for vectors $h_1, \dots, h_k \in \R^d$ and $x \in int(\K)$, for $k \geq 1$, we
recursively define 
\begin{align*}
	&D^kF(x)[h_1, \dots, h_k] ~\deq~ \lim_{\eps \ra 0 } \frac{D^{k-1} (x + \eps h_k) [h_1, \dots, h_{k-1}] - D^{k-1} (x) [h_1,\dots, h_{k-1}]}{\eps},
\end{align*} 
where $D^0F(x) \deq F(x)$.
Let $F$ be a self-concordant barrier of $\K$ with a parameter $\nu$ (see Appendix~\ref{sec:self_conc_def} for the definition and Section~\ref{sec:apps} for examples). The barrier induces a  Riemannian metric whose metric tensor is the Hessian of $F$ \cite{NesterovTodd08}. In other words, the metric tensor on the tangent space at $x$ assigns to a vector $v$ the length 
$$\|v\|^2_x \deq D^2 F(x)[v, v],$$ 
and to a pair of vectors $v, w$, the inner product $$\inner{v, w}_x \deq D^2F(x)[v, w] \ .$$  The unit ball in $\|\cdot\|_x$ around a point $x$ is called the \emph{Dikin ellipsoid} \cite{Nemirovski04lectures}.  

For $x, y \in \K$, let $\rho(x, y)$ be the Riemannian distance $\rho(x, y) = \inf_\Gamma \int_z \|d
\Gamma\|_z$ where the infimum is taken over all rectifiable paths $\Gamma$ from $x$ to $y$. Let $\MM$ be the metric space whose point set is $\K$ and metric is $\rho$, and define $\rho(S_1,S_2) = \inf\limits_{x \in S_1, y \in S_2} \rho(x, y)$.  The first main ingredient of the analysis is an isoperimetric inequality.
\begin{theorem}\label{thm:LVanalog} Let $S_1$ and $S_2$ be measurable subsets of $\K$ and $\mu$ a probability measure supported on $\K$ that
possesses a density whose logarithm is concave. Then it holds that
\begin{align*} 
	\mu((\K \setminus S_1) \setminus S_2)  \geq \frac{1}{2(1 + 3 \nu)} \rho(S_1, S_2) \mu(S_1) \mu(S_2).
\end{align*} 
\end{theorem}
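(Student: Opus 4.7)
The plan is to follow the Lovász--Simonovits localization template, reducing the multidimensional isoperimetric statement to a one-dimensional inequality along chords, and then to convert the classical Euclidean cross-ratio distance of Lovász--Simonovits to the Riemannian distance $\rho$ via the self-concordance of $F$.

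First, I would apply the Localization Lemma of Lovász--Simonovits. For integral inequalities against log-concave measures, localization reduces the problem to the case where $\K$ is replaced by a line segment (a ``needle'') $[p,q]$ inside $\K$, carrying a density of the form $e^{\alpha t}$ with respect to Lebesgue measure. Restricting $S_1, S_2$ to $T_1 = S_1 \cap [p,q]$ and $T_2 = S_2 \cap [p,q]$, the task becomes: for any chord $[p,q]$ of $\K$, any log-affine density $\mu$ on it, and any measurable $T_1, T_2$ on the chord,
$$\mu([p,q] \setminus (T_1 \cup T_2)) \;\geq\; \frac{1}{2(1+3\nu)}\,\rho(T_1, T_2)\,\mu(T_1)\,\mu(T_2),$$
where $\rho$ is still the Riemannian distance in $\M$, evaluated between points on the chord.

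Second, I would invoke the classical one-dimensional Lovász--Simonovits inequality, which states that for a log-concave density on the interval $[p,q]$,
$$\mu([p,q] \setminus (T_1 \cup T_2)) \;\geq\; d_\sigma(T_1, T_2)\,\mu(T_1)\,\mu(T_2),$$
where $d_\sigma(x,y) = \log\frac{(y-p)(q-x)}{(x-p)(q-y)}$ is the cross-ratio distance on the chord (assuming $p,x,y,q$ in order). With this in hand, the only remaining task is the distance comparison
$$d_\sigma(x,y) \;\geq\; \frac{1}{2(1+3\nu)}\,\rho(x,y)$$
for $x,y$ interior to a chord.

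The main obstacle, and the step where the factor $1+3\nu$ enters, is this last comparison. Restricting $F$ to the chord produces a one-dimensional $\nu$-self-concordant barrier $\widetilde F$ on $(p,q)$, and the Riemannian length along the chord is $\int_x^y \sqrt{\widetilde F''(t)}\,dt$, which upper bounds $\rho(x,y)$. The barrier and self-concordance inequalities $\widetilde F'(t)^2 \leq \nu\,\widetilde F''(t)$ and $|\widetilde F'''| \leq 2 (\widetilde F'')^{3/2}$, together with the monotonicity of $\widetilde F'$ and the Dikin-ellipsoid containment, allow one to bound $\sqrt{\widetilde F''(t)}$ by a multiple of $\bigl(\tfrac{1}{t-p} + \tfrac{1}{q-t}\bigr)$, with a constant proportional to $\sqrt{\nu}$ or $\nu$. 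Integrating from $x$ to $y$ and telescoping the resulting logarithms into a cross-ratio expression produces the $(1+3\nu)$ factor. Tracking the extra factor of $2$ from the reduction to ordered pairs $T_1 < T_2$ (and the corresponding ``mid'' estimate in the 1D LS step) yields the stated constant $2(1+3\nu)$. The technical heart of the argument is this integration against the barrier Hessian along the chord; the rest is essentially bookkeeping on top of known results.
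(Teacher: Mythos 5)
Your overall route coincides conceptually with the paper's: reduce to a one-dimensional cross-ratio isoperimetric statement on chords, then compare that cross-ratio/Hilbert distance to the Riemannian distance $\rho$ using the self-concordance of $F$. The paper does not actually re-run the Localization Lemma, however; it simply cites the Lov\'asz--Vempala result from \cite{LovVem07geometry} in the form
$\mu((\K\setminus S_1)\setminus S_2)\geq\sigma(S_1,S_2)\mu(S_1)\mu(S_2)$
with $\sigma$ the cross ratio, which subsumes your first two steps (since the Hilbert metric $d_H=\ln(1+\sigma)\leq\sigma$). So steps one and two of your proposal are extra work but not wrong.

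The genuine gap is in your third step, and it is exactly the step that carries the constant. You assert that the barrier inequalities plus ``monotonicity of $\widetilde F'$ and the Dikin-ellipsoid containment'' bound $\sqrt{\widetilde F''(t)}$ above by a multiple of $\frac{1}{t-p}+\frac{1}{q-t}$ with ``a constant proportional to $\sqrt{\nu}$ or $\nu$.'' This is precisely Nesterov--Nemirovskii \cite[Theorem 2.3.2(iii)]{NNbook}, namely that the local Dikin norm is controlled by the symmetrized Minkowski functional with constant $2(1+3\nu)$, and the paper uses it as a black box. Two problems with your sketch: (i) the $\sqrt{\nu}$-versus-$\nu$ uncertainty is not a detail one can leave open, since it is the entire content of the theorem's constant, and the correct answer is linear in $\nu$; (ii) Dikin-ellipsoid containment (the unit Dikin ball lies in $\K$) gives a \emph{lower} bound $\sqrt{\widetilde F''(t)}\geq\max(\tfrac{1}{t-p},\tfrac{1}{q-t})$, which is the wrong direction --- the upper bound you need is the harder, $\nu$-dependent statement, requiring the barrier inequality $|DF[h]|^2\leq\nu D^2F[h,h]$ and a non-trivial argument, not just ellipsoid containment. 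Relatedly, the factor $2$ in $2(1+3\nu)$ comes from that same Nesterov--Nemirovskii bound, not from any ``reduction to ordered pairs'' in the one-dimensional step, so that piece of bookkeeping in your last paragraph is misattributed. Finally, a stylistic note: the paper proves the comparison infinitesimally and uses additivity of $d_H$ along chords plus the Nesterov--Todd estimate \eqref{1lNT} to identify $\lim_{y\to x}\rho(x,y)/\|x-y\|_x=1$, rather than integrating $\sqrt{\widetilde F''}$ over the chord; both are valid mechanisms once the Nesterov--Nemirovskii comparison is in hand.
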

The theorem ensures that two subsets well-separated in $\rho$ distance must have a large mass between them. A lower bound on conductance of our Markov chain will follow from this isoperimetric inequality. We remark that convexity of the set $\K$ is crucial for the above property. A classical example of a non-convex shape with a ``bottleneck'' is a dumbbell. For this body, the above statement clearly fails, and a ``local'' random walk on such a body gets trapped in either of the two parts for a long time.

\section{The Markov Chain}
\label{sec:mixing}

Let ${\mathcal B}$ be the Borel $\sigma$-field on $\K$. Given an initial probability measure on $\K$, a Markov chain is specified by a collection of one-step transition probabilities $$\{\Ps(x,B), x\in\K, B\in{\mathcal B}\}$$ such that $x\mapsto \Ps(x,B)$ is a measurable map for any $B\in{\mathcal B}$ and $\Ps_x(\cdot)\deq \Ps(x,\cdot)$ is a probability measure on $\K$ for any $x\in\K$. 

For $x \in int(\K)$, let $\g^r_x$ denote
the unique Gaussian probability density function on $\R^d$ such that 
$$\g^r_x(y) \varpropto \exp \left(-\frac{d\|x - y \|_x^2 }{r^2} +
V(x)\right), ~~~~V(x) \deq \frac{1}{2} \ln  \det  D^2F(x)$$ and $r$ is a parameter that is chosen according to a condition specified below. The covariance of this distribution is given by the Hessian of $F$ at point $x$, and thus the contour lines are scaled Dikin ellipsoids.

The Markov chain considered in this paper is based on the Dikin Walk introduced by Kannan and Narayanan \cite{kannan2012random}. Adapted to sampling from log-concave distributions in this paper, the Markov chain is parametrized by a convex function $s$ and a step size $r$. Rather than writing out the unwieldy explicit form of the transition kernel $\Ps_x$, we can give it implicitly as the following random walk:\\

\begin{center}
    \begin{minipage}{.75\columnwidth} \tt
    		\begin{itemize}
			\item[]\hspace{-1cm}  With probability $1/2$, set $w := x$. 
			\item[]\hspace{-1cm}  With probability $1/2$, sample $z$ from $\g^r_{x}$ and
				\begin{enumerate}
					\item[]\hspace{-1cm} If $z \notin \K$, let $w := x$. 
					\item[]\hspace{-1cm} If $ z \in \K$, let $w :=
			                    \begin{cases}
			                      z & \text{with prob. } \min\left(1,\,\, \frac{\g^r_z(x) \exp(s(x))}{\g^r_{x}(z)\exp(s(z))}  \right) \\
			                      x & \hbox{otherwise.}
			                    \end{cases}$
				\end{enumerate}
			\end{itemize}
			
	\vspace{-0mm}
    \end{minipage}
\end{center}
The Markov chain is \emph{lazy}, as it stays at the current point with probability at least $1/2$. This ensures uniqueness of the stationary distribution \cite{LovSim93}. Furthermore, a simple calculation shows that the detailed balance conditions are satisfied with respect to a stationary distribution $\mu$ whose density (with respect to the Lebesgue measure) is proportional to $\exp(-s(x))$. Indeed, to see that $\mu(x)\Ps_x(dz) = \mu(z)\Ps_z(dx)$, it suffices to observe that
\begin{align*}
	\exp(-s(x))\g^r_x(z) &\min\left(1,\,\, \frac{\g^r_z(x) \exp(s(x))}{\g^r_{x}(z)\exp(s(z))}  \right)\\
	&~~~~~~~~~~~~~~~~~~~~=\exp(-s(z))\g^r_z(x) \min\left(1,\,\, \frac{\g^r_x(z) \exp(s(z))}{\g^r_{z}(x)\exp(s(x))}  \right).
\end{align*}
Therefore the Markov chain is reversible and has the desired stationary measure $\mu$.  

The value of $r$ has a specific meaning: most of the $y$'s sampled from $\g^r_x$ are within a thin ``Dikin shell'' of radius proportional to $(\E \|x-y\|^2_x)^{1/2} = r$ by measure-concentration arguments. We will therefore refer to $r$ as the effective ``step size''. An important and non-trivial result from the theory of interior point methods is that the unit Dikin ellipsoid is contained in the set $\K$ and gives a good approximation to the local geometry of the set (see Figure~\ref{fig:dikin_walk} below). Thanks of this fact, the sampling procedure has in general better mixing properties than the Ball Walk \cite{LovSim93,Vempala05survey}.

\subsection{\textbf{Step Size Conditions}}
\label{sec:step_size_cond}

The analysis of the Markov chain requires the steps $r$ to be not too large to ensure that different enough transition probability functions happen only for far away points. The precise upper bounds on $r$ depend on the convex function $s(x)$ and can be calculated on the fly when we move to the setting of a time-varying function. We give four conditions:

\begin{condition}[Linear Functions]
	\label{cond:lin}
	If $s$ is linear, we may set $r = 1/d$. 
\end{condition}
\begin{condition}[Lipschitz Functions]
	\label{cond:lip}
	For a function $s$ that is $L$-Lipschitz with respect to the Euclidean norm, we may set the step size $r = \min\left\{\frac{1}{d}, \frac{1}{L}\right\}$.
\end{condition}
\begin{condition}[Smooth Functions]
	\label{cond:smooth}
	Suppose $s$ has Lipschitz-continuous gradients: there exists $\sigma>0$ such that $\|\nabla s(x)- \nabla s(y)\|\leq \sigma\|x-y\|$. We may then set the step size to be $\min\left\{\frac{1}{d}, \frac{1}{\sqrt{\sigma}}\right\}$.
\end{condition}
These three conditions can be shown to follow from a more general sufficient step size condition that is based on ``local'' information:
\begin{condition}[General Condition]
	\label{cond}
	Fix constants $C,C'>0$. Given the convex function $s(x)$, the step size
	$r \leq \min\left\{
		\frac{1}{d}, r^*
		\right\}$
	is a valid choice if there exists a linear function $<g,x>$ such that
	$$r^* \leq \sup \left\{~r~:~ \forall z, w\in\K ~\text{ with }~ \|z-w\|_z \leq C'r, ~~~ \Big| s(z)- s(w)-\inner{g,z-w} \Big| < C \right\} \ .$$
\end{condition}
The condition says that for two points, with one being inside the $O(r)$-Dikin ellipsoid around the other point, the function is within a constant of  being linear. It follows from the last condition that, for instance, if $s(x)=\inner{b,x}+a(x)$ is a sum of a linear and a non-linear Lipschitz part, the step size is only affected by the Lipschitz constant of the non-linear part.

It is simple to verify that the step size in Condition~\ref{cond:lip} satisfies Condition~\ref{cond}. Indeed, for any $w$ such that $\|z-w\|_z\leq C'r$, we have $\|z-w\|\leq C''r R$ (where $R$ is the radius of the largest ball contained in $\K$). Take $g_z$ and $g_w$ to be any subgradients of $s$ at $z$ and $w$, respectively. We then have
$$\left| s(z)-s(w)-\inner{g_w,z-w} \right| \leq \inner{g_z-g_w, z-w} \leq 2L\|z-w\| \leq 2 \ .$$
Notice that for Condition~\ref{cond:smooth}, the above calculation becomes 
$$ \inner{g_z-g_w, z-w} \leq \sigma\|z-w\|^2 \leq 1 \ .$$

In the remainder of this paper, $C$ will denote a universal constant that may change from line to line. The exact value of the final constant in Lemma~\ref{1lcond} below can be traced in the proofs; we omit this calculation for the sake of brevity.

\subsection{\textbf{Conductance of the Markov Chain}}
\label{sec:cond}

In order to show rapid mixing of the proposed Markov chain, we prove a lower bound on its \emph{conductance}
$$\Phi \deq \inf\limits_{\mu(S_1) \leq \frac{1}{2}}\frac{\int_{S_1} \Ps_x(\K\setminus S_1) d\mu(x)}{\mu( S_1)},$$
where $\Ps_x$ is the one-step transition function defined earlier. Once such a lower bound is established, the following general result on the reduction of distance between distributions will imply exponentially fast convergence.
\begin{theorem}[Lov\'{a}sz-Simonovits \cite{LovSim93}]\label{1thm:L1} Let $\gamma_0$ be the initial distribution for a lazy reversible ergodic Markov chain whose conductance is $\Phi$ and stationary measure is $\gamma$.
For every bounded $f$, let $\|f\|_{\gamma} \deq \sqrt{\int_\K f(x)^2 d\gamma(x)}$. For any fixed $f$, let $Ef$ be the map that takes $x$ to $\int_\K f(y) d\Ps_x(y)$. Then if $\int_\K f(x) d\gamma(x) = 0$, it holds that
$$\|E^k f\|_{\gamma} \leq \left(1 -\frac{\Phi^2}{2}\right)^k \|f\|_{\gamma} \ .$$
\end{theorem}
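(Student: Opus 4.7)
The plan is to prove the standard Cheeger-type bound linking conductance to the $L^2$ spectral gap, and then iterate. First I would set up the functional-analytic framework. By the detailed-balance identity $d\gamma(x)\,\Ps_x(dy) = d\gamma(y)\,\Ps_y(dx)$ (verified at the start of Section~\ref{sec:mixing}), the operator $Ef(x) = \int f(y)\,d\Ps_x(y)$ is self-adjoint on $L^2(\gamma)$. Laziness lets us write $\Ps_x = \tfrac12\delta_x + \tfrac12 Q_x$ for some substochastic kernel $Q_x$, so $E = \tfrac12 I + \tfrac12 \tilde E$ with $\|\tilde E\|\leq 1$; hence $E \succeq 0$ on $L^2(\gamma)$. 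Since $E\mathbf{1}=\mathbf{1}$ and $E$ is self-adjoint, the subspace of mean-zero functions is $E$-invariant, so iterating a one-step norm bound on that subspace is meaningful.

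The technical heart is the Cheeger inequality
$$\bigl\langle (I-E)f,\, f\bigr\rangle_\gamma \;\geq\; \frac{\Phi^2}{2}\,\|f\|_\gamma^2 \qquad \text{for every mean-zero } f,$$
whose left-hand side is the Dirichlet form $\mathcal{E}(f) = \tfrac12 \int\!\!\int (f(x)-f(y))^2\, \Ps_x(dy)\, d\gamma(x)$. I would prove it in three moves. First, shift to a median $m$ of $f$: since $\mathcal{E}$ is translation-invariant and $\|f-m\|_\gamma^2 = \|f\|_\gamma^2 + m^2 \geq \|f\|_\gamma^2$ when $\int f\,d\gamma = 0$, one may assume the supports of $\{f>0\}$ and $\{f<0\}$ each have $\gamma$-mass $\leq 1/2$. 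Second, decompose $f = g_+ - g_-$ into nonnegative pieces with disjoint supports; a short case analysis gives the pointwise inequality $(f(x)-f(y))^2 \geq (g_+(x)-g_+(y))^2 + (g_-(x)-g_-(y))^2$, so it suffices to prove the bound for each $g \in \{g_+, g_-\}$ separately. Third, for such a $g$, a layer-cake identity together with reversibility give
$$\int\!\!\int |g(x)^2-g(y)^2|\, \Ps_x(dy)\, d\gamma(x) = 2\!\int_0^\infty\!\!\int_{S_t}\!\Ps_x(\K \setminus S_t)\, d\gamma(x)\, dt \;\geq\; 2\Phi\|g\|_\gamma^2,$$
where $S_t = \{g^2 > t\}$ satisfies $\gamma(S_t) \leq 1/2$ so the conductance bound applies. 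Cauchy--Schwarz on the factorization $|g(x)^2-g(y)^2|=|g(x)-g(y)|\cdot|g(x)+g(y)|$, combined with $\int\!\!\int (g(x)+g(y))^2\,\Ps_x(dy)\,d\gamma(x) \leq 4\|g\|_\gamma^2$ from stationarity, then upgrades this linear bound to the quadratic one $\mathcal{E}(g) \geq (\Phi^2/2)\|g\|_\gamma^2$.

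With the Cheeger inequality in hand, the conclusion is routine: since $E$ is self-adjoint and positive semidefinite, its operator norm on the invariant mean-zero subspace equals $\sup \langle Ef, f\rangle_\gamma / \|f\|_\gamma^2 \leq 1 - \Phi^2/2$, so $\|Ef\|_\gamma \leq (1-\Phi^2/2)\|f\|_\gamma$ there, and $k$-fold iteration yields the theorem. The main obstacle is clearly the Cheeger inequality: the layer-cake plus Cauchy--Schwarz passage from the linear quantity $|g(x)^2-g(y)^2|$ to the quadratic Dirichlet form in $g(x)-g(y)$ is the one delicate step, while the setup, the median shift, and the final spectral-theoretic step are all essentially bookkeeping.
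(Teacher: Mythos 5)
The paper gives no proof of this theorem; it is quoted verbatim as a known result of Lov\'asz--Simonovits \cite{LovSim93}, so there is no ``paper proof'' against which to compare. What you have written is an independent proof, and as far as I can tell it is correct in all its steps. A few remarks on the comparison with the original and on the delicate points.

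Your route is the modern spectral one: laziness makes $E=\tfrac12 I+\tfrac12\tilde E\succeq 0$ on $L^2(\gamma)$, reversibility makes it self-adjoint, the mean-zero subspace is $E$-invariant, and then the Cheeger inequality $\langle (I-E)f,f\rangle_\gamma\geq(\Phi^2/2)\|f\|_\gamma^2$ on that subspace pins the top of the spectrum at $1-\Phi^2/2$. This is essentially the Lawler--Sokal/Jerrum--Sinclair argument adapted to a general state space. Lov\'asz and Simonovits themselves proved their Corollary~1.5 by a different, more elementary device (the ``Lov\'asz--Simonovits curve,'' a pointwise comparison of the $s$-conductance function along iterates), which avoids spectral theory and also handles the case when conductance is only controlled for sets of measure in a sub-interval of $(0,\tfrac12]$. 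For the fixed-conductance statement here, the two routes give the same constant, so your choice is a legitimate alternative, arguably cleaner to verify.

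The places where one must be careful, and where you were: (i) the positivity $E\succeq 0$ is genuinely used; without laziness the constant degrades and one only gets $\|Ef\|_\gamma\leq \max(\lambda_2,|\lambda_{\min}|)\|f\|_\gamma$ with $\lambda_{\min}$ possibly close to $-1$, so the theorem as stated would be false; (ii) the median shift does not preserve the mean-zero property, but it is used only to guarantee $\gamma(S_t)\leq \tfrac12$ so the conductance applies, while $\|f-m\|_\gamma^2\geq\|f\|_\gamma^2$ and $\mathcal E(f-m)=\mathcal E(f)$ make the bound transfer back; (iii) the pointwise inequality $(f(x)-f(y))^2\geq(g_+(x)-g_+(y))^2+(g_-(x)-g_-(y))^2$ reduces to $(g_+(x)-g_+(y))(g_-(x)-g_-(y))\leq 0$, which you should convince yourself of via the disjoint-supports case check, since it is the one place where sign patterns matter; (iv) in the Cauchy--Schwarz upgrade, the bound $\int\!\!\int(g(x)+g(y))^2\,\Ps_x(dy)\,d\gamma(x)\leq 4\|g\|_\gamma^2$ uses stationarity for the $g(y)^2$ term and the contraction (or simply $(a+b)^2\leq 2a^2+2b^2$) for the cross term. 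All of this checks out; your description is accurate and complete at the level of a proof sketch.
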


To prove a lower bound on conductance $\Phi$, we first relate the Riemannian metric $\rho$ to the proposed Markov Chain. Intuitively, the following result says that for close-by points, their transition distributions cannot be far apart in the total variation distance $d_{TV}$.
\begin{lemma}\label{1lem:gp}
If $x, y \in \K $ and $\rho(x, y) \leq \frac{ r}{C \sqrt{d}}$
for some constant $C$, then $$d_{TV}(\Ps_x, \Ps_y) \leq 1 - \frac{1}{C'}$$ 
for some constant $C'$.
\end{lemma}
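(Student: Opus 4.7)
The plan is to decompose each transition kernel as $\Ps_x = \eta_x \delta_x + T_x$, where $\delta_x$ is the atom at $x$ arising from laziness, from proposals falling outside $\K$, and from Metropolis rejections, while $T_x$ is the absolutely continuous part on $\K$ with density $\frac{1}{2}\,\g^r_x(z)\,\alpha_x(z)\,\I_{\K}(z)$ and $\alpha_x(z) \deq \min\left(1,\,\frac{\g^r_z(x)\eee^{s(x)}}{\g^r_x(z)\eee^{s(z)}}\right)$. Since $d_{TV}(\Ps_x,\Ps_y) \leq 1 - \int_\K \min\{t_x(z), t_y(z)\}\,dz$, where $t_x,t_y$ denote the densities of $T_x,T_y$, it suffices to lower bound this shared mass by a positive constant independent of $d$.

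First I would translate the Riemannian bound $\rho(x,y) \leq r/(C\sqrt{d})$ into a local-norm bound $\|x-y\|_x = O(r/\sqrt d)$: because $r \leq 1/d$, the minimizing geodesic from $x$ to $y$ stays inside a small Dikin neighborhood of $x$, and the metric tensor is nearly constant there. Self-concordance of $F$ (Appendix~\ref{sec:self_conc_def}) then gives the two-sided Hessian comparison $(1-\eps)^2 D^2F(x) \preceq D^2F(y) \preceq (1-\eps)^{-2} D^2F(x)$ with $\eps = \|x-y\|_x$, and a bound $|V(x)-V(y)| = O(d\eps)$ on the log-determinant correction.

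Next I would bound $d_{TV}(\g^r_x, \g^r_y)$ by computing the KL divergence between two Gaussians whose centers differ by $O(r/\sqrt d)$ in both local norms and whose covariances are multiplicatively $(1 \pm O(\eps))$-close. The mean-difference term contributes $\frac{2d}{r^2}\|x-y\|_y^2 = O(1/C^2)$, and the trace and log-determinant terms contribute $O(d\eps) = O(1/(C\sqrt d))$, so Pinsker's inequality yields $d_{TV}(\g^r_x, \g^r_y) \leq \frac{1}{2C''}$ once $C$ is large enough.

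Finally, the acceptance probability $\alpha_x(z)$ must be bounded below by a constant on a subset of Gaussian support of mass $1 - O(1/C)$. By Gaussian concentration, with high probability the proposal $z$ satisfies $\|x-z\|_x = O(r)$; on this good event, Condition~\ref{cond} forces $s(z) - s(x)$ to agree with a linear function $\inner{g, z-x}$ up to an additive constant, while the self-concordance estimates above imply that $\log(\g^r_z(x)/\g^r_x(z))$, which involves $V(x)-V(z)$ and the asymmetric quadratic forms $\|x-z\|_z^2$ vs.\ $\|z-x\|_x^2$, is also $O(1)$. Hence the Metropolis log-ratio is $O(1)$ and $\alpha_x(z) = \Omega(1)$, and similarly for $\alpha_y$. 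Coupling the two proposals maximally and using common uniform random variables for the accept/reject and $\K$-membership tests produces shared continuous mass of $\Omega(1)$, giving $d_{TV}(\Ps_x,\Ps_y) \leq 1 - 1/C'$. The main technical obstacle is the bookkeeping in this last step: the log-Jacobian $V$ and the asymmetric quadratic forms must all be compared via self-concordance, and the step size constraint $r \leq \min\{1/d, r^*\}$ is precisely what keeps the $s$-part of the Metropolis ratio bounded; one must track constants so that $C$ can be chosen large enough to absorb the Pinsker loss and the tail of Gaussian concentration while keeping $C'$ dimension-free.
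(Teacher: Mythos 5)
Your overall architecture is close to the paper's: both reduce $\rho(x,y)\le r/(C\sqrt d)$ to $\|x-y\|_x=O(r/\sqrt d)$ via the Nesterov--Todd comparison \eqref{1lNT}, both use self-concordance to control the log-Jacobian $V$ and the asymmetric quadratic forms, and both lower-bound the shared continuous mass of the two kernels. Where the paper plugs everything into the single identity
\begin{align*}
1 - d_{TV}(\Ps_x,\Ps_y) \geq \tfrac12\,\E_z\!\left[\min\!\left\{1,\ \tfrac{\g^r_y(z)}{\g^r_x(z)},\ \tfrac{\g^r_z(x)e^{s(x)}}{\g^r_x(z)e^{s(z)}},\ \tfrac{\g^r_z(y)e^{s(y)}}{\g^r_x(z)e^{s(z)}}\right\}\right],
\end{align*}
you propose a Pinsker/KL bound for $d_{TV}(\g^r_x,\g^r_y)$ plus a maximal coupling; this repackaging is fine in principle.

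However, there is a genuine gap in the acceptance-ratio step. You write that on the high-probability event $\|x-z\|_x = O(r)$, Condition~\ref{cond} gives $s(z)-s(x)=\inner{g,z-x}+O(1)$, and you then conclude ``the Metropolis log-ratio is $O(1)$ and $\alpha_x(z)=\Omega(1)$.'' That last inference does not follow, because Condition~\ref{cond} only controls the \emph{non-linear} remainder; the linear part $\inner{g,z-x}$ is not bounded at all (for instance, for linear $s$ Condition~\ref{cond:lin} sets $r=1/d$ regardless of $\|g\|$, so $\inner{g,z-x}$ is a Gaussian with standard deviation proportional to $\|g\|$, which can be arbitrarily large). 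So $s(z)-s(x)=O(1)$ is false with high probability, and your claim that the log-ratio is $O(1)$ on a high-probability event cannot be salvaged by taking $C$ large. The paper's Lemma~\ref{lem:term1} resolves exactly this: it does \emph{not} attempt to bound $\inner{g,z-x}$, but instead shows that with probability at least $\mathrm{erfc}(1/\sqrt 2)>0.32$ the sign is favorable, i.e.\ $\inner{g,z-x}\le 0$ \emph{and} $\inner{g,z-y}\le 0$ simultaneously, so that both $s$-ratios only help. Your plan needs that sign argument (and the resulting constant-but-not-high-probability good event) in order to intersect with the $O(1)$ Gaussian-concentration and log-Jacobian events; replacing ``$O(1)$ with high probability'' by ``favorable sign with probability $\ge 0.32$'' is the missing piece. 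Also note that ``the step size constraint is what keeps the $s$-part bounded'' is misleading --- the step size only bounds the curvature correction, not the gradient magnitude.
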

Lemma~\ref{1lem:gp} together with the isoperimetric inequality of Theorem~\ref{thm:LVanalog} give a lower bound on conductance of the Markov Chain. 
\begin{lemma}\label{1lcond}
    Let $\mu$ be a log-concave distribution with support on $\K$ whose density with respect to the Lebesgue measure is proportional to $\exp\{-s(x)\}$, and suppose an appropriate step size condition (Section~\ref{sec:step_size_cond}) for the Markov chain is satisfied. Then there exists a constant $C>0$ such that the
    conductance of the above Markov chain is bounded below as
	$$\Phi \geq \frac{r}{C \nu \sqrt{d}} \ .$$ 
\end{lemma}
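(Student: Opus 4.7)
The plan is to follow the standard Lovász–Simonovits recipe for converting an isoperimetric inequality plus a ``close points have close transition kernels'' lemma into a conductance bound. Fix any measurable $S_1 \subset \K$ with $\mu(S_1) \leq 1/2$, and set $S_2 = \K \setminus S_1$. Let $C'$ be the constant from Lemma~\ref{1lem:gp}, pick a threshold $\epsilon = 1/(4C')$, and introduce the ``bad'' subsets
\[
S_1' = \{x \in S_1 : \Ps_x(S_2) < \epsilon\}, \qquad S_2' = \{x \in S_2 : \Ps_x(S_1) < \epsilon\}.
\]

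First I would dispose of the easy cases. If $\mu(S_1 \setminus S_1') \geq \mu(S_1)/2$, then every point outside $S_1'$ flows into $S_2$ with probability at least $\epsilon$, giving $\int_{S_1} \Ps_x(S_2)\, d\mu \geq \epsilon \mu(S_1)/2$, which already yields a conductance bound of $\epsilon/2$. A symmetric argument handles the case $\mu(S_2 \setminus S_2') \geq \mu(S_2)/2$, using reversibility to write $\int_{S_1}\Ps_x(S_2)\,d\mu = \int_{S_2}\Ps_x(S_1)\,d\mu$ and $\mu(S_2) \geq \mu(S_1)$. So the interesting case is when $\mu(S_1') > \mu(S_1)/2$ and $\mu(S_2') > \mu(S_2)/2$.

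In that case, for any $x \in S_1'$ and $y \in S_2'$ we have $\Ps_x(S_1) > 1-\epsilon$ and $\Ps_y(S_2) > 1-\epsilon$, so $d_{TV}(\Ps_x,\Ps_y) \geq |\Ps_x(S_1) - \Ps_y(S_1)| > 1 - 2\epsilon = 1 - 1/(2C')$. The contrapositive of Lemma~\ref{1lem:gp} then forces $\rho(x,y) > r/(C\sqrt{d})$, hence $\rho(S_1',S_2') \geq r/(C\sqrt{d})$. Applying Theorem~\ref{thm:LVanalog} to the disjoint sets $S_1',S_2'$ gives
\[
\mu\big((\K \setminus S_1') \setminus S_2'\big) \;\geq\; \frac{1}{2(1+3\nu)} \cdot \frac{r}{C\sqrt{d}} \cdot \mu(S_1')\, \mu(S_2') \;\geq\; \frac{r}{C'' \nu \sqrt{d}}\, \mu(S_1),
\]
where the last step uses $\mu(S_i') \geq \mu(S_i)/2$ together with $\mu(S_2) \geq 1/2$.

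The left-hand side equals $\mu(S_1 \setminus S_1') + \mu(S_2 \setminus S_2')$, and each of these contributes flow of at least $\epsilon$ times itself across the cut (the $S_2$-side via reversibility again). Adding the two flow bounds gives
\[
2 \int_{S_1} \Ps_x(S_2)\, d\mu \;\geq\; \epsilon \big(\mu(S_1 \setminus S_1') + \mu(S_2 \setminus S_2')\big) \;\geq\; \frac{\epsilon\, r}{C'' \nu \sqrt{d}}\, \mu(S_1),
\]
which rearranges to the claimed bound $\Phi \geq r/(C\nu\sqrt{d})$ after absorbing $\epsilon$ and numerical factors into $C$.

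The main obstacle is not the argument itself, which is fairly formulaic, but ensuring that the constants in Lemma~\ref{1lem:gp} are well-matched to the threshold $\epsilon$ chosen here so that the contrapositive actually yields the Riemannian separation $r/(C\sqrt{d})$. The step-size conditions of Section~\ref{sec:step_size_cond} enter precisely through Lemma~\ref{1lem:gp}, so provided any one of those conditions holds, the Gaussian proposal densities at nearby points overlap enough to make the chain of implications go through, and the remaining work is only the bookkeeping of universal constants.
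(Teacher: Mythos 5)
Your proposal is correct and follows essentially the same Lov\'asz--Simonovits argument the paper uses: define the ``bad'' sets $S_1',S_2'$ via a threshold on one-step escape probability, handle the easy case where either $S_i\setminus S_i'$ is a large fraction of $S_i$ via reversibility, and otherwise combine the contrapositive of Lemma~\ref{1lem:gp} with Theorem~\ref{thm:LVanalog} to lower-bound the measure of the separating region $(\K\setminus S_1')\setminus S_2'$. The only differences from the paper are cosmetic: you split cases at the fraction $1/2$ while the paper uses $1-1/C$, and you explicitly bound $d_{TV}(\Ps_x,\Ps_y) \geq |\Ps_x(S_1)-\Ps_y(S_1)|$ (arguably cleaner than the paper's display, which is off by a factor of $2$ in the constant but harmless); the constant accounting works out in both.
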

We remark that the step size $r$ enters the lower bound on $\Phi$. While we would like the steps to be large, the conditions outlined earlier dictate a limitation on how large $r$ can be. In particular, we always have $r\leq 1/d$. The step size needs to be even smaller for functions $s$ for which a linear approximation is poor.

\section{Tracking the Distributions}
\label{sec:tracking}

Having specified the Markov chain and the step size, we now turn to the problem of tracking a sequence of distributions $\mu_1,\ldots,\mu_t,\ldots$. For each $t\geq 1$, define a Markov chain with parameters $r_t$ and $s_t$, and let its transition kernel be denoted by $\Ps_{t}(x,B)$ for $x\in\K$ and $B\in{\mathcal B}$. Let $\Phi_t$ denote the conductance of this chain. The  chain will be run for $\tau_t$ steps starting from the end of the chain at time $t-1$. Formally, let the $i$-th step of the $t$-th chain be denoted by the random variable $X_{t,i}$. Define $\tau_0=0$ and let $\sig{0}{0}$ be the initial distribution of $X_{0,0}$. Then $X_{t,i}$ has distribution
$$\sig{0}{0}\Ps_{1}^{\tau_1}\cdots \Ps_{t-1}^{\tau_{t-1}} \Ps_t^{i}$$
and we have made the identification $X_{s,\tau_s} = X_{s+1,0}$, gluing the successive chains together. Let the  distribution of $X_{t,i}$ be denoted by $\sig{t}{i}$. By the definition of the chain, $\sig{t}{i}$ is a distribution with bounded density, supported on $\K$.

\begin{figure}[htbp]
	\centering
		\includegraphics[height=1.2in]{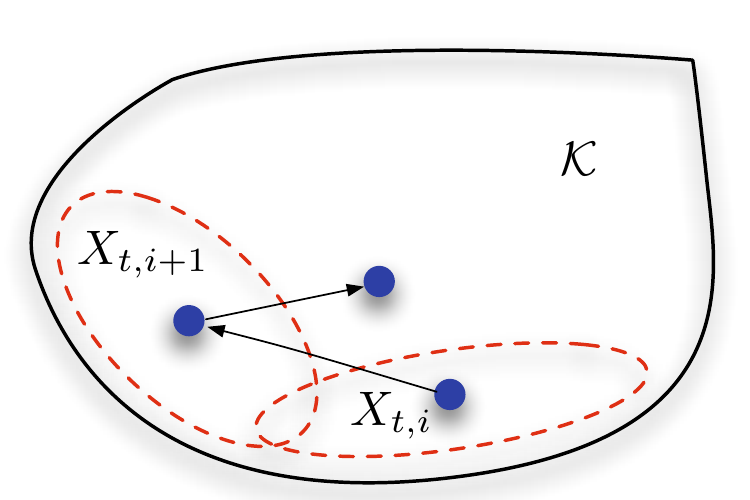}
	\caption{Steps of the Dikin Walk. The next point is sampled from a Gaussian distribution with a shape (contours depicted with dashed lines) corresponding to Dikin ellipsoids. These ellipsoids approximate well the local geometry.}
	\label{fig:dikin_walk}
\end{figure}

\subsection{\textbf{Measuring the Change}}
Let $\| \cdot\|_{t}$ denote the $\LL_2$ norm with respect to the measure $\mu_t$, defined as $\|f\|_{t} = \left(\int_\K {f}^2 d \mu_t\right)^{1/2}$ for a measurable function $f : \K \ra \R$.  Further, let $\|\cdot\|_\K$ denote the supremum norm $\|f\|_\K = \sup_{x\in\K} |f(x)|$ and let
\begin{align}
	\label{eq:bounded_ratio}
	\beta_{t+1} = \max\left\{ \left\|d\mu_t / d\mu_{t+1}  \right\|_\K, \left\|d\mu_{t+1} / d\mu_{t}  \right\|_\K \right\} \ .
\end{align}
This ratio provides an upper bound on the point-wise change of the density function. 
A straightforward way to upper bound $\beta_{t+1}$ is by writing 
$$\sup_{x\in\K} \frac{e^{-s_t(x)} }{e^{-s_{t+1} (x) }}\frac{\int_{\K}e^{-s_{t+1}(x)}dx}{\int_{\K}e^{-s_{t}(x)}dx} \leq \sup_{x\in\K} e^{2|s_t(x)-s_{t+1}(x)|} 
$$
and, hence, 
\begin{align}
	\label{eq:log_beta}
	\log \beta_{t+1} \leq ~2\| s_t(x)-s_{t+1}(x)\|_\K \ .
\end{align}
Another way to measure the change in successive distributions is with respect to the $\mathcal{L}_2$ norm:
\begin{align}
	\label{eq:l2_bounded_ratio}
	\alpha_{t+1} = \left\|d\mu_t/d\mu_{t+1}  \right\|_{t+1} \ .
\end{align}
In contrast to the point-wise change, the ratio $\alpha_{t+1}$ is more difficult to calculate. In this respect, the following result, which follows from the proof of \cite{LovVem06simulated,kalai2006simulated}, will be useful:
\begin{lemma}
	\label{lem:l2change_bdd} Let $s_t$ be a convex function and $s_{t+1}=\left(1-\delta\right)^{-1} s_t$. Let $\mu_t$ and $\mu_{t+1}$ be defined as in \eqref{eq:def_mu_t}. Then
	$$\alpha_{t+1} \leq \left( 1+ \frac{\delta^2}{1-2\delta}\right)^{d/2} $$
	In particular, if $\delta\leq d^{-1/2} \leq 1/3$, then $\alpha_{t+1}\leq 5$.
\end{lemma}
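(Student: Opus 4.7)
The plan is to reduce the claim to a classical log-concavity property of the partition function along the temperature parameter, which is implicit in \cite{LovVem06simulated,kalai2006simulated}. With $s_{t+1} = s_t/(1-\delta)$, define $Z(\lambda) \deq \int_\K e^{-\lambda s_t(x)}\, dx$, so that $Z_t = Z(1)$ and $Z_{t+1} = Z(a)$ with $a \deq 1/(1-\delta)$. Using $\alpha_{t+1}^2 = \int (d\mu_t/d\mu_{t+1})\, d\mu_t$ and expanding the ratio of densities, the exponent $-2s_t + s_{t+1}$ simplifies to $-(1-2\delta)/(1-\delta)\cdot s_t$, yielding
$$\alpha_{t+1}^2 \;=\; \frac{Z_{t+1}}{Z_t^2}\int_\K e^{-s_t(x)\,(1-2\delta)/(1-\delta)}\, dx \;=\; \frac{Z(a)\,Z(b)}{Z(1)^2},$$
where $b \deq (1-2\delta)/(1-\delta)$. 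The crucial observation is that $(a,b)$ is symmetric about $1$, namely $a+b = 2$, while $ab = (1-2\delta)/(1-\delta)^2$, so $1/(ab) = 1 + \delta^2/(1-2\delta)$.

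The technical input is that $\lambda \mapsto \lambda^d\, Z(\lambda)$ is log-concave on $(0,\infty)$. To see this, view $e^{-s_t}\mathbf{1}_\K$ as a log-concave function on $\R^d$ and substitute $y = \lambda x$ to rewrite
$$\lambda^d Z(\lambda) \;=\; \int_{\R^d} \exp\bigl(-\lambda\, s_t(y/\lambda)\bigr)\, \mathbf{1}\{y \in \lambda \K\}\, dy.$$
The exponent is (minus) the perspective of the convex function $s_t$, hence jointly convex in $(\lambda, y)$ on the convex region $\{\lambda > 0,\; y/\lambda \in \K\}$. The integrand is therefore log-concave in $(\lambda, y)$, and its $y$-marginal is log-concave in $\lambda$ by Pr\'ekopa--Leindler.

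Evaluating this log-concavity at the symmetric pair $(a,b)$ with midpoint $1$ gives $(ab)^{d}\,Z(a)\,Z(b) \leq Z(1)^2$, i.e.
$$\alpha_{t+1}^2 \;\leq\; \frac{1}{(ab)^{d}} \;=\; \left(1 + \frac{\delta^2}{1-2\delta}\right)^{d},$$
and taking square roots proves the first assertion. For the second, if $\delta \leq d^{-1/2} \leq 1/3$, then $\delta^2 \leq 1/d$ and $1-2\delta \geq 1/3$, so $\delta^2/(1-2\delta) \leq 3/d$, and then $(1+3/d)^{d/2} \leq e^{3/2} < 5$ by $1+x \leq e^x$.

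The only non-trivial step is the log-concavity of $\lambda \mapsto \lambda^d Z(\lambda)$; everything else is bookkeeping around the $a+b=2$ symmetry. Since this log-concavity is standard in the simulated annealing literature, I would either cite \cite{LovVem06simulated,kalai2006simulated} directly or include the short Pr\'ekopa--Leindler argument above for self-containedness.
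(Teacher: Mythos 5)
Your proposal is correct and follows essentially the same route as the paper: reduce $\alpha_{t+1}^2$ to a ratio of partition functions $Z(a)Z(b)/Z(1)^2$ with $a+b=2$, then invoke the log-concavity of $\lambda \mapsto \lambda^d Z(\lambda)$ (the paper states this for $Y(a)=\int_\K e^{-a s_{t+1}}$ and cites \cite[Lemma 3.1]{kalai2006simulated}, which is your $Z$ up to the substitution $\lambda = a/(1-\delta)$). The only addition is that you supply the Pr\'ekopa--Leindler/perspective-function proof of that log-concavity rather than citing it, and your constant computation $(1+3/d)^{d/2} \leq e^{3/2} < 5$ is a slightly cleaner derivation of the same bound.
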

We remark that the ratio between $\mu_t$ and $\mu_{t+1}$ measured in the supremum norm may be exponentially large, while the $\mathcal{L}_2$ change is small. As in \cite{LovVem06simulated, kalai2006simulated}, this fact will be crucial in this paper when we study simulated annealing.

\subsection{\textbf{Tracking the Distributions: Main Results}}
Denote the error in approximating the stationary distribution at the end of $t$-th chain by
\begin{align}
	\xi_t ~\deq~ \left\|\frac{d\sig{t}{\tau_t}}{d\mu_t} - 1 \right\|_t
\end{align}
and let $$\Delta_t ~\deq~ \frac{r_{t}^2}{C d \nu^2} \ .$$ 

\begin{theorem}
	\label{thm:recurrence}
	The errors $\xi_t$ satisfy the recurrence
	\begin{align}
		\label{eq:upper_error}
		\xi_{t} \leq (1-\Delta_{t})^{\tau_{t}}(\beta_t^{3/2} \xi_{t-1} + \sqrt{\beta_t}(\beta_t-1)) 
	\end{align}
	for any $t\geq 1$.
\end{theorem}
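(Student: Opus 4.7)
The plan is to combine the Lov\'asz--Simonovits contraction from Theorem~\ref{1thm:L1} with the conductance bound of Lemma~\ref{1lcond}, and then account for the change of stationary measure from $\mu_{t-1}$ to $\mu_t$ through a single change-of-measure estimate controlled by $\beta_t$.

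Step 1. Write the chain update in terms of a mean-zero function. Set
$$
\phi(x) \;\deq\; \frac{d\sig{t-1}{\tau_{t-1}}}{d\mu_{t-1}}(x), \qquad \psi(x) \;\deq\; \frac{d\mu_{t-1}}{d\mu_t}(x), \qquad h_0 \;\deq\; \phi\psi - 1 \;=\; \frac{d\sig{t}{0}}{d\mu_t} - 1,
$$
using the identification $X_{t,0}=X_{t-1,\tau_{t-1}}$. Since $\mu_t$ is stationary for $\Ps_t$, applying the operator $E_t f(x) \deq \int f(y)\,d\Ps_t(x,y)$ to the density $h_0+1$ yields $E_t^{\tau_t}h_0 + 1 = d\sig{t}{\tau_t}/d\mu_t$, and $\int h_0\,d\mu_t = 0$. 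Hence Theorem~\ref{1thm:L1} applied to the lazy reversible chain $\Ps_t$ with stationary measure $\mu_t$ and conductance $\Phi_t$ gives
$$
\xi_t \;=\; \|E_t^{\tau_t} h_0\|_{\mu_t} \;\leq\; \bigl(1 - \tfrac{1}{2}\Phi_t^2\bigr)^{\tau_t}\,\|h_0\|_{\mu_t}.
$$

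Step 2. Insert the conductance bound. By Lemma~\ref{1lcond}, $\Phi_t \geq r_t/(C\nu\sqrt{d})$, so $\tfrac12 \Phi_t^2 \geq r_t^2/(Cd\nu^2) = \Delta_t$ after absorbing the factor of two into the constant $C$. Therefore $(1-\tfrac12\Phi_t^2)^{\tau_t} \leq (1-\Delta_t)^{\tau_t}$, which produces the first factor in the target bound.

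Step 3. Convert the $\mu_t$-norm of $h_0$ to $\mu_{t-1}$. Since $d\mu_t/d\mu_{t-1} = \psi^{-1}$ and $\|\psi^{-1}\|_\K \leq \beta_t$,
$$
\|h_0\|_{\mu_t}^2 \;=\; \int h_0^2 \,\psi^{-1}\, d\mu_{t-1} \;\leq\; \beta_t \,\|h_0\|_{\mu_{t-1}}^2.
$$

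Step 4. Split $h_0 = (\phi-1)\psi + (\psi - 1)$ and bound each piece in $\LL_2(\mu_{t-1})$. Using $\|\psi\|_\K \leq \beta_t$,
$$
\|(\phi-1)\psi\|_{\mu_{t-1}} \;\leq\; \beta_t\,\|\phi - 1\|_{\mu_{t-1}} \;=\; \beta_t\,\xi_{t-1},
$$
and since $\psi$ lies between $\beta_t^{-1}$ and $\beta_t$,
$$
\|\psi-1\|_{\mu_{t-1}} \;\leq\; \|\psi-1\|_\K \;\leq\; \beta_t-1.
$$
Combining via the triangle inequality and Step~3:
$$
\|h_0\|_{\mu_t} \;\leq\; \sqrt{\beta_t}\Bigl(\beta_t\,\xi_{t-1} + (\beta_t-1)\Bigr) \;=\; \beta_t^{3/2}\,\xi_{t-1} + \sqrt{\beta_t}(\beta_t-1),
$$
which substituted into Step~1 yields the recurrence.

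The main obstacle is keeping the change-of-measure accounting tight so that $\beta_t$ enters with the stated exponents, rather than a larger power: the $\sqrt{\beta_t}$ overhead comes from the conversion $\|\cdot\|_{\mu_t} \leq \sqrt{\beta_t}\|\cdot\|_{\mu_{t-1}}$, while the extra factor of $\beta_t$ on $\xi_{t-1}$ comes from pulling $\psi$ out of $(\phi-1)\psi$ in sup norm. Everything else is bookkeeping plus the application of the already-established conductance lower bound.
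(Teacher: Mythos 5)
Your proof is correct. It follows the same overall route as the paper --- apply Theorem~\ref{1thm:L1} to the mean-zero density perturbation with stationary measure $\mu_t$, insert the conductance lower bound of Lemma~\ref{1lcond} to get the $(1-\Delta_t)^{\tau_t}$ contraction factor, and bound the initial error $\|h_0\|_{\mu_t}$ by a change-of-measure argument --- but the internal derivation of the change-of-measure estimate differs from the paper's. The paper isolates that step as Lemma~\ref{lem:hstep2}: after the common conversion $\|h_0\|_{\mu_t}\leq\sqrt{\beta_t}\,\|h_0\|_{\mu_{t-1}}$, it uses the decomposition $h_0 = \phi(\psi-1) + (\phi-1)$ (in your notation) and bounds $\|\phi(\psi-1)\|_{\mu_{t-1}}$ by $(\beta_t-1)\|\phi\|_{\mu_{t-1}}$ via a split into positive and negative parts of $\psi-1$. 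You instead use the symmetric decomposition $h_0 = (\phi-1)\psi + (\psi-1)$ and bound each piece directly by the sup-norm controls $\|\psi\|_\K\leq\beta_t$ and $\|\psi-1\|_\K\leq\beta_t-1$. Both arrive at exactly the same intermediate bound $\|h_0\|_{\mu_{t-1}}\leq\beta_t\,\xi_{t-1}+(\beta_t-1)$ and hence the same coefficients $\beta_t^{3/2}$ and $\sqrt{\beta_t}(\beta_t-1)$; your version is a touch more direct since it sidesteps the positive/negative-part bookkeeping.
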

\begin{proof}[Proof of Theorem~\ref{thm:recurrence}]
	We iteratively apply Theorem~\ref{1thm:L1} with $f = \frac{d\sig{t}{j}}{d\mu_t}-1$ and the stationary distribution $\gamma=\mu_t$, and observe that $Ef$ takes $\sig{t}{j}$ to $\sig{t}{j+1}$. Then from Lemma~\ref{1lcond}, for $t\geq 1$ and $i\geq 1$,
		\begin{align*}
			\norm{\frac{d\sig{t}{i}}{d\mu_t} - 1 }_{t}  \leq \norm{\frac{d\sig{t}{0}}{d\mu_t} - 1 }_{t} \cdot \left(1 - \Delta_t\right)^i
		\end{align*}
		Using the first part of Lemma~\ref{lem:hstep2} (see Section~\ref{sec:proofs})
		\begin{align*}
			\norm{\frac{d\sig{t}{0}}{d \mu_{t}} - 1 }_{t}  \leq \beta_{t}^{3/2} \norm{ \frac{d\sig{t}{0}}{d\mu_{t-1}}  - 1 }_{t-1} + \sqrt{\beta_{t}}(\beta_{t}-1) ,
		\end{align*}
		concluding the proof. An alternative recurrence, using the second part of Lemma~\ref{lem:hstep2}, is
		$$\xi_{t} \leq (1-\Delta_{t})^{\tau_{t}}(\sqrt{\beta_t} \xi_{t-1} + \sqrt{\beta_t-1}), $$
		which is better for large $\beta_t$ but worse for $\beta_t\approx 1$.
\end{proof}

We would like to adaptively choose $\tau_t$ to make the right-hand side \eqref{eq:upper_error} small. While the value of the error $\xi_{t-1}$ at the previous round is not available for this purpose, let us maintain an upper bound $u_{t-1}$ on this error. Thus, we may write $\tau_t$ as a function $\tau_t(u_{t-1}, s_t, r_t, \beta_t)$. Suppose at round $t=0$ we ensure that $\xi_0\leq u_0$. Then, recursively, we may compute $u_t$ as the upper bound in \eqref{eq:upper_error}:
\begin{align}
	\label{eq:recur_u_t}
	u_t \geq (1-\Delta_{t})^{\tau_{t}}(\beta_t^{3/2} u_{t-1} + \sqrt{\beta_t}(\beta_t-1))
\end{align}
Then, given the initial condition, we have $\xi_t\leq u_t$ for all $t\geq 0$. 

Let us consider some consequences of Theorem~\ref{thm:recurrence}. In particular, we are interested in situations when we can track the distributions with only \emph{one step} of the random walk.
\begin{corollary} 
	\label{cor:singlestep}
	Let $\tau_t = 1$ for all $t\geq 1$ and suppose $\xi_0 \leq u_0 = \sqrt{\beta_0}(\beta_0-1)/\Delta_0$ with $\Delta_0=\frac{1}{C d^3 \nu^2} \leq \frac{1}{2}$. Assume that $\beta_t$ is non-decreasing and $\Delta_t$ is non-increasing in $t$, and suppose 
	\begin{align}
		\label{eq:cor_relation_one_step}
		\beta_t^{3/2} \leq 1+ \frac{\Delta_t^2}{1-\Delta_t}
	\end{align}
	for all $t\geq 1$. Then we have 
	$$ \xi_t \leq u_t = \frac{\sqrt{\beta_t}(\beta_t-1)}{\Delta_t}$$
	for all $t\geq 0$. In particular, \eqref{eq:cor_relation_one_step} is satisfied whenever $\beta_t-1 \leq 0.4\Delta_t^2$.
\end{corollary}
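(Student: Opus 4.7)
The plan is to prove the corollary by induction on $t$ with hypothesis $\xi_t \leq u_t := \sqrt{\beta_t}(\beta_t-1)/\Delta_t$. The base case is exactly the assumed initial bound at $t=0$. For the inductive step, the monotonicity assumptions ($\beta_t$ non-decreasing, $\Delta_t$ non-increasing) immediately give $u_{t-1} \leq u_t$, so the inductive hypothesis yields $\xi_{t-1} \leq u_t$. This little observation is the reason one can analyze the recurrence without tracking a separate sequence for $u_{t-1}$.

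Applying Theorem~\ref{thm:recurrence} with $\tau_t = 1$ and substituting $\xi_{t-1}\leq u_t$, one obtains
$$\xi_t \leq (1-\Delta_t)\bigl(\beta_t^{3/2}\xi_{t-1} + \sqrt{\beta_t}(\beta_t-1)\bigr) \leq (1-\Delta_t)\sqrt{\beta_t}(\beta_t-1)\left(\frac{\beta_t^{3/2}}{\Delta_t} + 1\right).$$
Requiring that this not exceed $u_t=\sqrt{\beta_t}(\beta_t-1)/\Delta_t$ and clearing the common positive factor reduces, after a line of algebra, to $(1-\Delta_t)\bigl(\beta_t^{3/2}+\Delta_t\bigr)\leq 1$, which rearranges to precisely $\beta_t^{3/2} \leq 1 + \Delta_t^2/(1-\Delta_t)$, i.e.\ the hypothesis \eqref{eq:cor_relation_one_step}. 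This closes the induction.

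For the ``in particular'' part, assume $\beta_t-1\leq 0.4\,\Delta_t^2$. Since $\Delta_t\leq \Delta_0\leq 1/2$ we have $\beta_t-1\leq 0.1$, so the elementary inequality $(1+x)^{3/2}\leq 1+2x$ on $[0,1]$ (verified by noting that $x\mapsto (1+x)^{3/2}-1-2x$ is convex on $[0,1]$ with non-positive values at both endpoints) gives
$$\beta_t^{3/2} \leq 1 + 2(\beta_t-1) \leq 1 + 0.8\,\Delta_t^2 \leq 1 + \Delta_t^2 \leq 1 + \frac{\Delta_t^2}{1-\Delta_t},$$
using $1-\Delta_t\leq 1$ in the last step. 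Hence \eqref{eq:cor_relation_one_step} holds, completing the argument.

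No substantive obstacle is anticipated: the content of the corollary is simply that the quantity $u_t = \sqrt{\beta_t}(\beta_t-1)/\Delta_t$ is a ``fixed-point-like'' upper bound preserved by a single step of the Theorem~\ref{thm:recurrence} contraction, provided the per-step change $\beta_t$ stays within the budget \eqref{eq:cor_relation_one_step}. The only delicate point is using monotonicity to replace $u_{t-1}$ by $u_t$ inside the recurrence before the algebraic simplification, so that the induction has a self-contained form.
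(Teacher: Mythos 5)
Your proof is correct and uses the same core mechanism the paper points to, namely verifying that the recurrence of Theorem~\ref{thm:recurrence} (equivalently display \eqref{eq:recur_u_t}) preserves the bound $\xi_t\leq u_t$ when $\tau_t=1$. The monotonicity observation $u_{t-1}\leq u_t$, the algebraic reduction to $(1-\Delta_t)(\beta_t^{3/2}+\Delta_t)\leq 1$, and the elementary estimate $(1+x)^{3/2}\leq 1+2x$ on $[0,1]$ are all sound. One small remark: your direct induction is actually tighter than literally invoking Corollary~\ref{cor:multistep}, as the paper's one-line pointer suggests. The $\tau_t$ formula there already bakes in the approximation $\log\bigl(1/(1-\Delta_t)\bigr)\geq\Delta_t$, so taking it at face value with $\epsilon_t=u_t$ and demanding $\tau_t\leq 1$ would require $\beta_t^{3/2}\leq e^{\Delta_t}-\Delta_t$, a condition strictly stronger than \eqref{eq:cor_relation_one_step} (compare power series: $e^x-x = 1+\tfrac{x^2}{2}+\cdots$ versus $1+\tfrac{x^2}{1-x}=1+x^2+\cdots$). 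Working directly with the exact one-step recurrence, as you do, is the clean and logically tight way to arrive at precisely the stated hypothesis.
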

The proof of the above corollary follows from the more general result:
\begin{corollary}
	\label{cor:multistep}
	Fix a sequence $\epsilon_0,\ldots,\epsilon_t,\ldots $ of positive target accuracies and assume $\xi_0\leq \epsilon_0$. It is then enough to set
	\begin{align}
		\tau_t = \left\lceil\frac{1}{\Delta_t}\log\left( \beta_t^{3/2}\cdot\frac{\epsilon_{t-1}}{\epsilon_t}+\frac{\sqrt{\beta_t}(\beta_t-1)}{\epsilon_t}\right) \right\rceil
	\end{align}
	in order to ensure 
	$\xi_t \leq \epsilon_t$ for each $t\geq 0$. 
\end{corollary}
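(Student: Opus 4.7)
The plan is to prove Corollary~\ref{cor:multistep} by straightforward induction on $t$, treating Theorem~\ref{thm:recurrence} as a black box one-step recursion and choosing $\tau_t$ to collapse the right-hand side below the target $\epsilon_t$.

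The base case $t=0$ is the hypothesis $\xi_0\leq\epsilon_0$. For the inductive step, I would assume $\xi_{t-1}\leq\epsilon_{t-1}$ and apply the first recurrence of Theorem~\ref{thm:recurrence}, together with monotonicity of the bound in $\xi_{t-1}$, to get
\begin{align*}
\xi_t \;\leq\; (1-\Delta_t)^{\tau_t}\bigl(\beta_t^{3/2}\epsilon_{t-1} + \sqrt{\beta_t}(\beta_t-1)\bigr).
\end{align*}
To force the right-hand side to be at most $\epsilon_t$, it suffices to demand
\begin{align*}
(1-\Delta_t)^{\tau_t} \;\leq\; \frac{\epsilon_t}{\beta_t^{3/2}\epsilon_{t-1} + \sqrt{\beta_t}(\beta_t-1)}.
\end{align*}
Taking logarithms and using the elementary inequality $\log(1-\Delta_t)\leq -\Delta_t$ (valid since $\Delta_t\in(0,1)$ by the step-size condition), this reduces to
\begin{align*}
\tau_t \;\geq\; \frac{1}{\Delta_t}\log\!\left(\beta_t^{3/2}\cdot\frac{\epsilon_{t-1}}{\epsilon_t} + \frac{\sqrt{\beta_t}(\beta_t-1)}{\epsilon_t}\right),
\end{align*}
which is precisely what the prescribed choice of $\tau_t$ (with the ceiling) guarantees. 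Hence $\xi_t\leq\epsilon_t$, completing the induction.

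There is no real obstacle here; the content of the corollary is entirely algebraic given Theorem~\ref{thm:recurrence}. The only minor points to check are that $\Delta_t<1$ so that $\log(1-\Delta_t)$ is well defined and the bound $\log(1-\Delta_t)\leq-\Delta_t$ applies, and that the quantity inside the outer logarithm is at least $1$ so that $\tau_t\geq 1$ is a sensible number of steps (otherwise one can take $\tau_t=0$ and the claim still holds trivially). Both are immediate from the step-size conditions in Section~\ref{sec:step_size_cond} and from the fact that one would typically choose $\epsilon_t\leq\epsilon_{t-1}$ when tracking. Finally, Corollary~\ref{cor:singlestep} can be recovered from this result by specializing to $\epsilon_t = \sqrt{\beta_t}(\beta_t-1)/\Delta_t$ and verifying that $\tau_t=1$ suffices under condition \eqref{eq:cor_relation_one_step}.
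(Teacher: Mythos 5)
Your proof is correct and follows essentially the same route as the paper: apply the recurrence of Theorem~\ref{thm:recurrence} with the inductive bound $\xi_{t-1}\leq\epsilon_{t-1}$, solve for $\tau_t$, and invoke a logarithm inequality. Incidentally, your inequality $\log(1-\Delta_t)\leq-\Delta_t$ is the clean and correct one; the paper's intermediate chain $\log(1/(1-\Delta_t))\geq\log(1+\Delta_t)\geq\Delta_t$ contains a typo in the last step (the valid direction is $\log(1+\Delta_t)\leq\Delta_t$), though the needed conclusion $\log(1/(1-\Delta_t))\geq\Delta_t$ is still true.
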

\begin{proof}
	Immediate by writing
	$$u_t = (1-\Delta_{t})^{\tau_{t}}(\beta_t^{3/2} \epsilon_{t-1} + \sqrt{\beta_t}(\beta_t-1)) \leq \epsilon_t,$$
 solving for $\tau_t$, and using the approximation
	$\log(1/(1-\Delta_t)) \geq \log(1+\Delta_t)\geq \Delta_t \ .$
\end{proof}

We now consider the case when one has control on the $\mathcal{L}_2$ norm $\alpha_{t}$ of the change between successive distributions.
First, observe that closeness of the distributions in the norm $\|\cdot\|_t$ implies closeness in total variation distance as
\begin{align}
	\label{eq:l2_implies_tv}
	\int |d\sig{t}{i} - d\mu_t| = \int \left|\frac{d\sig{t}{i}}{d\mu_t} -1\right|d\mu_t 
	\leq \left\|\frac{d \sig{t}{i}}{d\mu_{t}} - 1  \right\|_{t} \ .
\end{align}

\begin{proposition}
	\label{prop:l2_multistep}
	Fix a sequence $\epsilon_0,\ldots,\epsilon_t,\ldots $ of positive target accuracies and assume $d_{TV} (\sig{0}{0}, \mu_0)\leq \epsilon_0$. Suppose we set
	\begin{align}
		\tau_t = \left\lceil\frac{1}{\Delta_t}\log \left(\frac{\alpha_t}{\epsilon_t}\right) \right\rceil \ .
	\end{align}
	Then the total variation distance between $\sig{t}{\tau_t}$ and $\mu_t$ is bounded as
	\begin{align}
		\label{eq:tv_bound}
		d_{TV} (\sig{t}{\tau_t}, \mu_t) \leq \sum_{s=0}^t \epsilon_s
	\end{align}
	for each $t\geq 0$. 
\end{proposition}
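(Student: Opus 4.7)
The plan is to decouple, at each time step, the error inherited from the previous chain from the new error introduced by switching targets from $\mu_{t-1}$ to $\mu_t$. To this end, let $\tilde\sig{t}{}$ be the distribution obtained by running the $t$-th Markov chain for $\tau_t$ steps starting from $\mu_{t-1}$ rather than from $\sig{t-1}{\tau_{t-1}}$. Then the triangle inequality gives
\begin{align*}
d_{TV}(\sig{t}{\tau_t}, \mu_t) \leq d_{TV}(\sig{t}{\tau_t}, \tilde\sig{t}{}) + d_{TV}(\tilde\sig{t}{}, \mu_t).
\end{align*}

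For the first term I would appeal to the classical fact that total variation distance is non-expansive under any Markov kernel: for probability measures $\nu_1, \nu_2$ and any transition kernel $P$, $d_{TV}(\nu_1 P, \nu_2 P) \leq d_{TV}(\nu_1, \nu_2)$ (by the coupling characterization of $d_{TV}$). Applying this $\tau_t$ times with $P = \Ps_t$ to the pair $(\sig{t-1}{\tau_{t-1}}, \mu_{t-1})$ yields $d_{TV}(\sig{t}{\tau_t}, \tilde\sig{t}{}) \leq d_{TV}(\sig{t-1}{\tau_{t-1}}, \mu_{t-1})$, which is the inductive hypothesis in disguise.

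For the second term I would invoke Theorem~\ref{1thm:L1} with stationary measure $\gamma = \mu_t$ and initial distribution $\mu_{t-1}$, applied to $f = d\mu_{t-1}/d\mu_t - 1$ (which integrates to zero against $\mu_t$). Lemma~\ref{1lcond} gives $\Phi_t^2/2 \geq \Delta_t$ up to the universal constant absorbed into $C$, so after $\tau_t$ steps
\begin{align*}
\left\| \frac{d\tilde\sig{t}{}}{d\mu_t} - 1 \right\|_t \leq (1-\Delta_t)^{\tau_t} \left\|\frac{d\mu_{t-1}}{d\mu_t} - 1 \right\|_t.
\end{align*}
A direct expansion shows
\begin{align*}
\left\|\frac{d\mu_{t-1}}{d\mu_t} - 1 \right\|_t^{\,2} = \int \left(\frac{d\mu_{t-1}}{d\mu_t}\right)^{\!2} d\mu_t - 2\int d\mu_{t-1} + 1 = \alpha_t^2 - 1 \leq \alpha_t^2,
\end{align*}
so together with \eqref{eq:l2_implies_tv} this yields $d_{TV}(\tilde\sig{t}{}, \mu_t) \leq (1-\Delta_t)^{\tau_t} \alpha_t$. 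With the prescribed $\tau_t = \lceil \Delta_t^{-1} \log(\alpha_t/\epsilon_t)\rceil$ and the inequality $\log(1/(1-\Delta_t)) \geq \Delta_t$, the right-hand side is bounded by $\epsilon_t$.

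Putting the two pieces together gives the one-step recurrence
\begin{align*}
d_{TV}(\sig{t}{\tau_t}, \mu_t) \leq d_{TV}(\sig{t-1}{\tau_{t-1}}, \mu_{t-1}) + \epsilon_t,
\end{align*}
and since $\tau_0 = 0$ makes the base case $d_{TV}(\sig{0}{\tau_0}, \mu_0) = d_{TV}(\sig{0}{0}, \mu_0) \leq \epsilon_0$ match the hypothesis exactly, a straightforward induction telescopes the recurrence into $\sum_{s=0}^t \epsilon_s$. I do not expect a serious obstacle: the only delicate points are (i) verifying that contraction in $\|\cdot\|_t$ from Theorem~\ref{1thm:L1} indeed transports along the Markov chain starting from the non-stationary distribution $\mu_{t-1}$ (which uses reversibility of $\Ps_t$ with respect to $\mu_t$ to identify $d(\mu_{t-1}\Ps_t^j)/d\mu_t - 1$ with $E^j f$), and (ii) separating the inherited error from the fresh error via the auxiliary chain $\tilde\sig{t}{}$ so that each can be controlled by the appropriate tool ($d_{TV}$ non-expansiveness for one, $\LL_2$ contraction for the other).
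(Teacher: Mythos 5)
Your proof is correct and follows essentially the same approach as the paper: you split the error at each step into the part inherited from the previous chain (controlled via $\mathcal{L}_1$/total-variation non-expansiveness of the Markov kernel) and the fresh error from switching targets (controlled via the $\mathcal{L}_2$ contraction of Theorem~\ref{1thm:L1} applied to $d\mu_{t-1}/d\mu_t - 1$). The only cosmetic difference is that the paper phrases the decomposition through a signed measure $\gamma_t = \sig{t}{\tau_t} - \mu_t$ and tracks its evolution under $E_{t+1}^{\tau_{t+1}}$, whereas you introduce the auxiliary distribution $\tilde\sigma_t = \mu_{t-1}\Ps_t^{\tau_t}$ and apply the triangle inequality; these are the same argument, and your explicit computation $\|d\mu_{t-1}/d\mu_t - 1\|_t^2 = \alpha_t^2 - 1$ makes the appeal to the definition of $\alpha_t$ slightly more transparent than the paper's terse phrasing.
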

\begin{proof}
	For any $t\geq 1$, let us write 
	\begin{align}
		\label{eq:decomp}
		\sig{t}{\tau_t} = \mu_{t} + \gamma_t
	\end{align} with a signed measure $\gamma_t = \sig{t}{\tau_t} - \mu_{t}$. By way of induction, suppose \eqref{eq:tv_bound} holds for time $t$. Consider the operator $E_{t+1}$ corresponding to the random walk of the $t+1$-st chain. The operator acts on a function $f$ by taking $f$ to $\int_\K f(y) d\Ps_{t+1}(x,y)$. Then applying Theorem~\ref{1thm:L1} to the function $d\mu_{t}/d\mu_{t+1}-1$, we have
	$$\left\|E_{t+1}^{\tau_{t+1}} \left(\frac{d\mu_{t}}{d\mu_{t+1}}-1 \right) \right\|_{t+1}\leq \epsilon_{t+1}$$
	by the choice of $\tau_{t+1}$ and the definition of $\alpha_{t+1}$. That is, upon the action of $E_{t+1}^{\tau_{t+1}}$,  $\mu_t$ is mapped to $\mu_{t+1}$ within an error of at most $\epsilon_{t+1}$ in the $\mathcal{L}_2$ sense (and, hence, in the total variation sense). Since the operator $E_{t+1}$ is non-expanding in the $\mathcal{L}_1$ sense, total variation of $\gamma_t$ does not increase under the action of $E_{t+1}^{\tau_{t+1}}$. In view of the inductive hypothesis for step $t$, we conclude $d_{TV} (\sig{t+1}{\tau_{t+1}}, \mu_{t+1}) \leq \sum_{s=0}^t \epsilon_s + \epsilon_{t+1}$, as desired.
\end{proof}

\section{Applications}
\label{sec:apps}

Before diving into the applications of the random walk, let us give several examples of sets $\K$ for which the self-concordant barrier $F$ and its Hessian can be easily calculated. In the following examples, assume that $\K$ has non-empty interior.
\begin{example} Suppose $\K$ is given by $m$ linear constraints of the form $\inner{a_j,x}\leq b_j$, $j=1,\ldots,m$. Then $F(x) = -\sum_{j=1}^m \log (b_j-\inner{a_j,x})$ is a self-concordant barrier with parameter $\nu=m$. The Hessian is easily computable:
	$$D^2 F(x) = \sum_{j=1}^m \frac{a_j a_j^\tr}{(b_j-\inner{a_j,x})^2} \ .$$
\end{example}
\begin{example}
	Let $\K = \{x\in\reals^d: f_j(x)\leq 0, j=1,\ldots,m\}$ where each $f_j$ is a convex quadratic form. Then $F(x) = -\sum_{j=1}^m \log(-f_j(x))$ is a self-concordant barrier with parameter $m$. As an example, the function $-\log(R-\|x\|^2)$ is a self-concordant barrier for the unit Euclidean sphere $\{x: \|x\|^2-1\leq 0\}$, with parameter $\nu=1$, and the Hessian is given by
	$$D^2 F(x) = \frac{2}{1-\|x\|^2}I + \frac{4}{(1-\|x\|^2)^2} xx^\tr \ .$$
\end{example}
Importantly, there always exists a self-concordant barrier with $\nu=O(d)$; yet, for some convex sets (such as the sphere) the parameter can even be constant.

Self-concordant barriers can be combined: if $F_j$ is $\nu_j$-self-concordant for $\K_j$, $j=1,\ldots,m$, then $\sum_j F_j$ is $\sum_j \nu_j$-self-concordant for the intersection $\cap_i \K_i$, given that it has nonempty interior. Thus, closed forms for the Hessian of the barrier, required for defining $\g^r_x$ in our Markov chain, can be calculated for many sets $\K$ of interest.  We refer to \cite{Nemirovski04lectures,NNbook} for further powerful methods for constructing the barriers.

\subsection{\textbf{Sampling from Posterior in Exponential Families}}
Suppose data $y_1,y_2,\ldots \in \Y$ are distributed i.i.d. according to a member of an exponential family with natural parameter $x$:
$$p(y|x) = \exp\{\inner{x,T(y)} - A(x) \} h(y)$$
where $A(x) = \int h(y)\exp\left\{\inner{x,T(y)}\right\}$ is a convex function and $T:\Y \mapsto \reals^d$ is a sufficient statistic. Suppose $x\in \K$; that is, we have some knowledge about the support of the parameter. We have in mind the situation where data arrive one at a time and we are interested in sampling from the associated posterior distributions.  The likelihood function after seeing $y_1,\ldots,y_t$ is
$$\ell(x) \propto \exp\left\{\inner{x,\sum_{i=1}^t T(y_i)} - tA(x)\right\}$$
and, together with a conjugate prior 
$\pi_{\kappa_1,\kappa_2}(x) \propto \exp\left\{\inner{x,\kappa_1}-\kappa_2 A(x)\right\}$ for some $(\kappa_1,\kappa_2)\in\reals^{d+1}$, 
we obtain the posterior distribution at time $t$
$$p_t(x|y) \propto \exp\left\{ \inner{x,\kappa_1+\sum_{i=1}^t T(y_i)} - (t+\kappa_2)A(x)\right\} \ .$$
We apply the sampling technique to this scenario by defining
$$s_0(x) = -\inner{x,\kappa_1} + \kappa_2 A(x), ~~~~ s_t(x)=-\inner{x,\kappa_1+\sum_{i=1}^t T(y_i)} + (t+\kappa_2)A(x) \ .$$
It remains to calculate the number of steps required to track the distributions as additional data arrive one-by-one. Let $L$ be the Lipschitz constant of $A(x)$ over $\K$ with respect to Euclidean norm, and let us assume $L$ to be finite. Then Condition~\ref{cond} is satisfied with $r=\min\left\{\frac{1}{(t+\kappa_2)L},\frac{1}{d}\right\}$. Furthermore, we may set 
$$\beta_t = \sup_{x\in\K}\exp\left\{2|\inner{x,T(y_{t})}-A(x)|\right\},$$
a quantity that depends on the observed data. Importantly, we do not need to provide an a priori data-independent bound of this type, which might not be finite. 

Suppose we would like to maintain a constant level $\epsilon>0$ of accuracy at each step $t$. Corollary~\ref{cor:multistep} guarantees this accuracy if each chain is run for 
$$\tau_t = \left\lceil \frac{1}{\Delta_t}\log\left( \beta_t^{3/2}+\frac{\sqrt{\beta_t}(\beta_t-1)}{\epsilon}\right) \right\rceil= \mathcal{O}\left(d\nu^2\max\{(t+\kappa_2)^2L^2, d^2\}+ \log(1/\epsilon)\right) \ .$$
One of the features of this bound is a relatively benign dependence on the dimension $d$, especially if the geometry of the set $\K$ allows the parameter $\nu=\mathcal{O}(1)$, as in the case of a sphere. On the negative side, the number of steps needed after seeing $t$ data points is proportional to $t^2$. Such an adverse dependence, however, is to be expected as the posterior distribution becomes concentrated very quickly.

We now demonstrate that stronger results can be achieved under additional assumptions via Condition~\ref{cond:smooth}. Suppose that $A$ is smooth: there exists $H\succeq 0$ such that
$$A(x)\leq A(w)+\inner{\nabla A(x), w-x} + (w-x)^\tr H (w-x)$$
for any $w,x\in \K$. This is a natural assumption, as the second derivative of the log normalization function $A$ corresponds to the variance of the random variable with the given parameter; furthermore, $A$ is differentiable. Let $\lambda_{\max}$ be the largest eigenvalue of $H$.
Then the condition yields $r_t=\frac{C}{\sqrt{(t+\kappa_2)\lambda_{\text{max}}}}$. To obtain $\epsilon$-accuracy, it suffices to set
$$\tau_t = \mathcal{O}\left(d\nu^2\max\{(t+\kappa_2)\lambda_{\max}, d^2\}+ \log(1/\epsilon)\right),$$
which has only linear dependence on the size of the data seen so far.

We remark that each step of the random walk requires evaluation of the log-partition function $A(x)$. If this function is not available in closed form, we may approximate the value $A(x)$ for each query $x$. In order to do this, we may run an additional sampling procedure with $s'(x)=\inner{x, T(y)}$. Alternatively, we may appeal to known methods for this problem, such as Hit-and-Run \cite{Vempala05survey}.

\subsection{\textbf{Sampling from Drifting Truncated Distributions}}

In the previous example, we employed the Markov chain to sample a parameter from a log-concave posterior. We now turn to the question of sampling from a log-concave distribution restricted to a convex set. This problem has a long history (see e.g. \cite{devroye1986book, gilks1992adaptive}), and it is recognized that sampling from truncated distributions is difficult even for nice forms such as the Normal distribution. One successful approach to this problem is the Gibbs sampling method \cite{robert1995simulation,damien2001sampling}, yet the rate of convergence is not generally available. The MCMC method of this paper yields a provably fast algorithm for such situations. Furthermore, we can track a drifting distribution over $\K$ with a small number of steps. 

For illustration purposes, we study a simple example of a truncated Normal distribution; the same techniques, however, apply more generally. To simplify calculations, suppose the distributions $\mu_t$ are defined to be ${\cN}({\bf c}_t, \frac{1}{d}I)$ over a convex compact set $\K \subset \reals^d$ and suppose the mean ${\bf c}_t$ is drifting within a Euclidean ball of radius $R$. With the definition in \eqref{eq:def_mu_t} we have $s_t(x) = \frac{1}{2}\|x-{\bf c}_t\|^2$. Define the drift $\delta_{t} = \|{\bf c}_t - {\bf c}_{t-1}\|$. In view of \eqref{eq:log_beta},
$$\log \beta_{t} \leq \sup_{x\in\K} \|{\bf c}_t-{\bf c}_{t-1}\|\cdot \|2x-{\bf c}_t-{\bf c}_{t-1}\|\leq C_{R,\K} \delta_{t}$$
where $C_{R,\K}$ depends on the radius $R$ and the radius of a smallest Euclidean ball enclosing $\K$. In the same manner, the  Lipschitz constant of $s_t(x)$ over $\K$ can be upper bounded by $L_{R,\K}$ that depends solely on the two radii. We may thus set the step size to be $r_t=\min\{\frac{1}{d},\frac{1}{L_{R,\K}}\}$. If we aim for a fixed target accuracy $\epsilon$ for all $t$, by Corollary~\ref{cor:multistep}, it is enough to make
\begin{align}
	\label{eq:multistep_suff_tau}
	\tau_t = \left\lceil \frac{1}{\Delta_t}\log\left( \beta_t^{3/2}+\frac{\sqrt{\beta_t}(\beta_t-1)}{\epsilon}\right) \right\rceil
\end{align}
steps. In the case that the drift $\delta_t$ is small enough, only one step is sufficient. To quantify the regime when this happens, observe that $\beta_t \leq \exp\{C_{R,\K}\delta\} \leq 1+C\delta_t$, and it is then enough to require 
$$\delta_t = \mathcal{O}\left(\Delta_t^2\right) =  \mathcal{O}\left(\frac{\min\{1/d^2, 1/L_{R,\K}^2\}}{d\nu^2}\right)$$
 in view of \eqref{eq:cor_relation_one_step}. It is quite remarkable that the one-step random walk can track the changing distribution up to the accuracy $\mathcal{O}\left(\delta_t\frac{d\nu^2}{r_t^2}\right)$, proportional to the size of the drift. Of course, better accuracy can be achieved by performing more steps, as per Corollary~\ref{cor:multistep}.

Another related application is to modeling with mixtures of log-concave distributions. Such models have been successful in clustering \cite{mclachlan2000finite, walther2009inference}, with a mixture of normal distributions being a classical example \cite{fraley2002model}. A mixture of parametric log-concave distributions can be written as $\sum_{i=1}^k \alpha_i \pi_i(\theta_i; x)$; here $\alpha_i$ are positive mixing weights summing to one, and $\pi_i$ are a distributions on $\K$ parametrized by $\theta_i$. A classical method for fitting models to data is the EM algorithm. Given that the parameters $\{\theta_i\}_{i=1}^k$ and the mixing weights $\{\alpha_i\}_{i=1}^k$ have been estimated from data, one may require random samples from this model for integration or other purposes. Given our procedure for sampling from a single log-concave distribution, one may simply pick the mixture according to the weights $\alpha_i$ and then sample from the component. The situation becomes interesting in the case of online arrival of data, when we need to re-compute the EM solution in light of additional data. By the arguments of \cite{RakCap06nips, CapRak06jmlr}, the solution to clustering problems (the analysis was performed for square loss) is \emph{stable} in the following sense: addition of $o(\sqrt{n})$ new data to a sample of size $n$ is unlikely to drastically move the solution (the argument is based on uniqueness of the maximum of an empirical process). This in turn implies that the parameters $\{\theta_i\}$ are unlikely to change by a large amount, and we may thus use the method of sampling from a drifting distribution described earlier. We also remark that the method can be easily parallelized since the Markov chains for the $k$ components do not interact.

\subsection{\textbf{Simulated Annealing for Convex Optimization}}
Let $f(x)$ be a proper convex $1$-Lipschitz function. The aim of convex optimization is to find $\tilde{x}$ with the property $f(\tilde{x}) - \min_{x\in\K} f(x) \leq \epsilon$ for a given target accuracy $\epsilon>0$. We consider the special case of linear function $f(x)=\inner{\ell,x}$, known as Linear Optimization. Complexity of an optimization procedure is often measured in terms of \emph{oracle calls} -- queries about the unknown function. A query about the function value is known as the zero-th order information, while a query about a subgradient at a point -- as the first order information. In the case that the oracle answer is given without noise, it is known that the complexity scales as  $\mathcal{O}\left(\text{poly}(d, \log(1/\epsilon))\right)$. The state-of-the-art result here is the method of \cite{kalai2006simulated, LovVem06simulated} which attains the $d^{4.5}$ dependence on the dimension.

We now apply our machinery to obtain a  $\mathcal{O}\left(\nu^2 d^{3.5}\log(1/\epsilon)\right)$ method. In particular, this yields an improved $d^{3.5}$ dependence on the dimension for the case when $\K$ has a favorable geometry: there exists a self-concordant barrier with a parameter $\nu=\mathcal{O}(1)$.

We use the annealing scheme of \cite{kalai2006simulated}. To this end, we set $s_t = \left(1-d^{-1/2}\right)^{-t} f$ and observe that the assumption of Lemma~\ref{lem:l2change_bdd} is satisfied with $\delta = d^{-1/2}$. Since functions are linear, we may set the step size $r_t = 1/d$ for all $t$. Hence, $\alpha_{t} \leq 5$ whenever $d>8$ (and a different constant can be obtained for smaller $d$ from the proof). By Proposition~\ref{prop:l2_multistep} with a constant accuracy $\epsilon_t= \epsilon\cdot  (\sqrt{d}\log(d/\epsilon))^{-1}$, by making 
\begin{align}
	\tau_t = \left\lceil Cd^3\nu^2\log \left(\frac{5\sqrt{d}\log(d/\epsilon)}{\epsilon}\right) \right\rceil 
\end{align}
steps for $t=1,\ldots, k$, we guarantee
\begin{align}
	\label{eq:opt_total_var}
	d_{TV} (\sig{k}{\tau_{k}}, \mu_{k}) \leq k\epsilon(\sqrt{d}\log(d/\epsilon))^{-1} \ .
\end{align}
According to \cite[Lemma 4.1]{kalai2006simulated}, if $X$ is chosen from a distribution with density proportional to $\exp\{-T^{-1}\inner{\ell, x}\}$, with $\|\ell\|=1$ and some temperature $T>0$, then $$\E(\inner{\ell, X})-\min_{x\in\K} \inner{\ell, x} \leq d T.$$
Hence, we take the desired temperature to be $T=\epsilon/d$, and the number of chains that permits the annealing schedule to reach this temperature can be calculated as $k=\sqrt{d}\log(\frac{d}{\epsilon})$. In view of \eqref{eq:opt_total_var}, the final output of the procedure is an $\epsilon$-accurate solution to the optimization problem. The complexity of the method is then $\mathcal{O}(d^{3.5}\nu^2 \log^2(d/\epsilon))$.

This result can be extended to Lipschitz convex functions beyond linear optimization. However, the step size condition for convex Lipschitz functions requires the steps to be $\mathcal{O}(1/\epsilon)$ towards the end of the annealing schedule. This in turn implies only a suboptimal  $\tilde{\mathcal{O}}(d\nu^2/\epsilon^2)$ complexity. It is an open question of whether Dikin Walk can handle such annealing schedules in a more graceful manner.

\subsection{\textbf{Sequential Prediction}}
Another application of the proposed sampling technique is to the problem of \emph{sequential prediction} with convex cost functions. Within this setting, the learner (or, the Statistician) is tasked with making a series of predictions while observing  a sequence of outcomes on which we place no distributional assumptions. The goal of the learner is to incur cost comparable to that of a fixed strategy chosen in hindsight after observing the data. Initially studied by Hannan \cite{Hannan57}, Blackwell \cite{Blackwell56}, and Cover \cite{Cover65}, the problem of achieving low \emph{regret} for all sequences has received much attention in the last two decades, and we refer the reader to \cite{CesaBianchiLugosi06book} for a comprehensive treatment. As we show in this section, a strategy that exponentially down-weighs the decisions with large costs is a good regret-minimization strategy, and this exponential form is amenable to the sampling technique of this paper whenever the costs are convex.

More specifically, let $\K\subset\reals^d$ be a convex compact set of decisions of the learner. Let $\loss_1,\ldots,\loss_T$ be a sequence of unknown cost functions $\loss_t:\K\to\reals$. On round $t$, the learner chooses a distribution (or, a {\em mixed strategy}) $\mu_{t-1}$ supported on $\K$ and ``plays'' a decision $Y_t \sim \mu_{t-1}$.\footnote{The index $t-1$ on  $\mu_{t-1}$ reflects the fact that $Y_t$ is chosen without the knowledge of $\loss_t$.} Nature then reveals the next cost function $\loss_t$. For example, in the well-studied problem of sequential probability assignment, the Statistician predicts the probability $x_t\in [0,1]=\K$ of the next outcome $\{0,1\}$ and incurs the cost $\loss_t(x_t) = |x_t-y_t|$ with respect to the actual outcome $y_t$. A randomized strategy $Y_t$ then incurs a cost $\loss_t(Y_t)$.

The goal of the learner is to minimize \emph{expected regret} 
$$\Reg_T(U) \deq \E\left[\sum_{t=1}^T \loss_t(Y_t)-\sum_{t=1}^T \loss_t(U)\right]$$
with respect to all randomized strategies defined by $p_U \in {\mathcal P}$, for some collection of distributions ${\mathcal P}$.  
A procedure that guarantees sublinear growth of regret with respect to any distribution $p_U\in {\mathcal P}$ and for any sequence of cost functions $\loss_1,\ldots,\loss_T$ will be called {\em consistent} with respect to ${\mathcal P}$. 

Define the cumulative cost functions $L_t(x) =  \sum_{s=1}^t \loss_s(x)$, and let $\eta>0$ be a parameter called \emph{the learning rate}. Fix $R(x)$ to be some convex function that defines the prior, let 
\begin{align}
	\label{eq:def_reg_s_t}
	s_t(x)=\eta L_t(x)+R(x), ~~~~~ s_0(x)=R(x)
\end{align} 
and define the probability distributions $\mu_t$ as in \eqref{eq:def_mu_t}. It turns out that this choice of $\mu_t$ is indeed a good regret-minimization strategy, as we show next. The method is similar to the Mixture Forecaster used in the prediction context \cite{Yamanishi98,Vovk01, AzouryWarmuth01, KakadeNg05}, and for a discrete set of decisions it is known as the celebrated Exponential Weights Algorithm \cite{Vovk90,LitWar94}. 

Let $D(p||q)$ stand for the Kullback-Leibler (KL) divergence between distributions $p$ and $q$. 
\begin{lemma}
	\label{lem:regret_randomized}
	For each $t\geq 1$, let $Y_t$ be a random variable with distribution $\mu_{t-1}$ as defined in \eqref{eq:def_mu_t}. The expected regret with respect to $U$ with distribution $p_U$ is
	$$\Reg_T(U)
	= \eta^{-1}\left(D(p_U||\mu_0)-D(p_U||\mu_T) \right) + \eta^{-1}\sum_{t=1}^T D(\mu_{t-1} || \mu_t).$$
	Specializing to the case $\loss_t:\K \mapsto [0,1]$ for all $t$,
	$$\Reg_T(U) \leq \eta^{-1} D(p_U||\mu_0) + T \eta/8.$$
\end{lemma}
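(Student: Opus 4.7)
The plan is the standard potential-function argument, where the log-partition function $\log Z_t$ serves as the potential and KL divergence tracks the progress. The key identity I would establish first is that $\log(d\mu_t/d\mu_0) = -\eta L_t - \log(Z_t/Z_0)$, which follows immediately from the definition of $s_t$ in \eqref{eq:def_reg_s_t} together with the density formula \eqref{eq:def_mu_t}. Integrating against $p_U$ gives
$$
D(p_U \| \mu_0) - D(p_U \| \mu_T) \;=\; \int \log\tfrac{d\mu_T}{d\mu_0}\, dp_U \;=\; -\eta\int L_T\, dp_U - \log(Z_T/Z_0).
$$

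Next, I would expand $D(\mu_{t-1}\|\mu_t)$ in the same way. Using $s_t - s_{t-1} = \eta \loss_t$,
$$
D(\mu_{t-1} \| \mu_t) \;=\; \int \mu_{t-1}(s_t - s_{t-1})\, dx + \log(Z_t/Z_{t-1}) \;=\; \eta\, \E\, \loss_t(Y_t) + \log(Z_t/Z_{t-1}).
$$
Summing over $t$ telescopes the $\log Z_t$ terms and yields $\sum_t D(\mu_{t-1}\|\mu_t) = \eta \sum_t \E \loss_t(Y_t) + \log(Z_T/Z_0)$. Combining this with the displayed identity above, the $\log(Z_T/Z_0)$ terms cancel and after dividing by $\eta$ one recovers exactly the claimed formula for $\Reg_T(U)$, since $\E \loss_t(U) = \int \loss_t\, dp_U$ and $\sum_t \int \loss_t\, dp_U = \int L_T\, dp_U$.

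For the second part, I would drop the non-positive term $-\eta^{-1} D(p_U\|\mu_T)$ and bound each $D(\mu_{t-1}\|\mu_t)$ using Hoeffding's lemma. Writing
$$
D(\mu_{t-1}\|\mu_t) \;=\; \eta\, \E_{\mu_{t-1}}\loss_t + \log \E_{\mu_{t-1}} e^{-\eta \loss_t},
$$
and invoking Hoeffding's lemma for the random variable $\loss_t(X)$ with $X\sim \mu_{t-1}$ taking values in $[0,1]$ gives $\log \E_{\mu_{t-1}} e^{-\eta \loss_t} \leq -\eta \E_{\mu_{t-1}}\loss_t + \eta^2/8$, hence $D(\mu_{t-1}\|\mu_t)\leq \eta^2/8$. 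Summing $t=1,\ldots,T$ and dividing by $\eta$ produces the claimed bound $\eta^{-1} D(p_U\|\mu_0) + T\eta/8$.

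There is no real obstacle here: the identity is a routine bookkeeping exercise with log-partition functions, and the inequality is the textbook Hoeffding estimate. The only thing to be slightly careful about is interchanging expectations and logarithms correctly (the $\log Z_t$'s must be treated as constants with respect to the $\mu_{t-1}$-integration), and ensuring that Hoeffding's lemma is applied to the bounded random variable $\loss_t(X)$ under the measure $\mu_{t-1}$, which is valid since $\loss_t$ takes values in $[0,1]$ pointwise and hence almost surely under any probability measure on $\K$.
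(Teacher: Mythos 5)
Your proof is correct and follows essentially the same route as the paper: telescoping the log-partition functions through the KL decomposition $D(\mu_{t-1}\|\mu_t) = \eta\,\E\loss_t(Y_t) + \log(Z_t/Z_{t-1})$, combining with the comparator term via $\int \log(d\mu_T/d\mu_0)\,dp_U$, and then bounding each $D(\mu_{t-1}\|\mu_t) \le \eta^2/8$ by Hoeffding's lemma (the paper's appeal to Lemma~A.1 of Cesa-Bianchi--Lugosi is exactly this). The only cosmetic difference is that the paper writes the final step as $D(\mu_{t-1}\|\mu_t)=\log\E\,e^{-\eta(\loss_t(Y_t)-\E\loss_t(Y_t))}$, which is the same quantity you bound.
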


	Before proceeding, let us make a few remarks. First, if the KL divergence between the comparator distribution $p_U$ and the prior $\mu_0$ is bounded for all $p_U\in{\mathcal P}$, the second statement of the lemma yields consistency and, even stronger, a $O(\sqrt{T})$ rate of regret growth (by choosing $\eta$ appropriately). To bound the  divergence between a continuous initial $\mu_0$ and a point distribution at some $x^*\in\K$, the analysis can be carried out in two stages: comparison to a ``small-covariance'' Gaussian centered at $x^*$, followed by an observation that the loss of the ``small-covariance'' Gaussian strategy is not very different from the loss of the deterministic strategy $x^*$. This analysis can be found in \cite[p. 326]{CesaBianchiLugosi06book} and gives a near-optimal $O(\sqrt{T\log T})$ regret bound.

	We defer the easy proof of Lemma~\ref{lem:regret_randomized} to Section~\ref{sec:proofs}. Having exhibited a good prediction strategy, a natural question is whether there exists a computationally efficient algorithm for producing a random draw from a distribution close to the desired mixed strategy $\mu_{t-1}$. To this end, we use the sampling method proposed in this paper.
	
	As a concrete example, consider linear functions $\loss_1,\ldots,\loss_T$ and let $R\equiv 0$. For simplicity assume boundedness $\loss_t:\K\mapsto[0,1]$. In this case, we may choose $\eta=\mathcal{O}(1/\sqrt{T})$. Then 
	$$ \beta_t \leq \exp\left\{2\eta \|\loss_{t}\|_\K\right\} \leq 1+C\eta$$
	for large enough $T$. Further, we set $r_t=1/d$ according to Condition~\ref{cond:lin}, and the requirement \eqref{eq:cor_relation_one_step} is seen to be satisfied for large enough $T$. With these choices of the parameters, the sequence of distributions $\mu_1,\ldots,\mu_t$ can be tracked with only one step of a random walk per iteration. The quality of this approximation is $\mathcal{O}\left(\eta d^3\nu^2\right)$ at each step.
Therefore, regret of the proposed random walk method is within $\mathcal{O}\left(T\eta d^3\nu^2\right)$ from the ideal procedure of Lemma~\ref{lem:regret_randomized}, as can be seen by writing
 	\begin{align*}
	\left|\E \loss_t(Y_t) - \E \loss_t(X_{t-1,1}) \right| \leq \int_{x\in\K} \left|\loss_t(x)|\cdot |d\sig{t-1}{1}(x) - d\mu_{t-1}(x) \right| \leq C \eta d^3 \nu^2 \ .
	\end{align*}
By choosing $\eta = \frac{1}{d^{3/2}\nu\sqrt{T}}$,
\begin{align}
	\label{eq:reg}
	\Reg_T(U) \leq Cd^{3/2}\nu D(p_U||\mu_0) \sqrt{T}.
\end{align}
A similar results holds for nonzero $R$, under the assumption that the $L_2$ distance between $d\mu_0(x)\propto \exp\{-R(x)\}dx$ and the uniform distribution on $\K$ is bounded. 

We now discuss interesting parallels between the proposed randomized method and the known deterministic optimization-based regret minimization methods. First, the statement of Lemma~\ref{lem:regret_randomized} bears striking similarity to upper bounds on regret in terms of Bregman divergences for the Follow the Regularized Leader and Mirror Descent methods \cite{lecturenotes08, BecTeb03}, \cite[Therem 11.1]{CesaBianchiLugosi06book}. Yet, the randomized method operates in the (infinite-dimensional) space of distributions while the deterministic methods work directly with the set $\K$.
Second, deterministic methods of online convex optimization face the difficulty of projections back to the set $\K$. This issue does not arise when dealing with distributions, but instead translates into the \emph{difficulty of sampling}. We find these parallels between sampling and optimization intriguing. Third, a single step of the proposed random walk requires sampling from a Gaussian distribution with covariance given by the Hessian of the self-concordant barrier. This step can be implemented efficiently whenever the Hessian can be computed. The computation time exactly matches \cite[Algorithm 2]{AbeHazRak08colt}: it is the same as time spent inverting a Hessian matrix, which is $\mathcal{O}(d^3)$ or less. Finally, as already mentioned, the idea of following a time-varying distribution is inspired by the method of following the central path in the theory of interior point methods \cite{NNbook,Nemirovski04lectures}. Similarly to the fast convergence of the chain under the lower bound on conductance, one has fast quadratic local convergence of interior point methods. One may therefore make parallels between conductance and local curvature. A further investigation of these connections is needed, especially in view of the recent developments on positive Ricci curvature of Markov chains \cite{ollivier2009ricci}.

\section{Proofs}
\label{sec:proofs}

\begin{lemma}
	\label{lem:hstep2}
	For any $t$ and $i\geq 0$, it holds that
	\begin{align*}
		\norm{\frac{d\sig{t}{i}}{d \mu_{t}} - 1 }_{t}  \leq \beta_{t}^{3/2} \norm{ \frac{d\sig{t}{i}}{d\mu_{t-1}}  - 1 }_{t-1} + \sqrt{\beta_{t}}(\beta_{t}-1)
	\end{align*}
	and, alternatively, 
	\begin{align*}
		\norm{\frac{d\sig{t}{i}}{d \mu_{t}} - 1 }_{t}  \leq \beta_{t}^{1/2} \norm{ \frac{d\sig{t}{i}}{d\mu_{t-1}}  - 1 }_{t-1} + \sqrt{\beta_{t}-1}
	\end{align*}
\end{lemma}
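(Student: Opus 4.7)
The plan is to reduce everything to the Radon--Nikodym density $h \deq d\mu_{t-1}/d\mu_t$. Introduce the shorthand $f \deq d\sig{t}{i}/d\mu_t - 1$ and $g \deq d\sig{t}{i}/d\mu_{t-1} - 1$, so that the two inequalities are exactly upper bounds on $\norm{f}_t$ in terms of $\norm{g}_{t-1}$. The chain rule gives $d\sig{t}{i}/d\mu_t = h \cdot d\sig{t}{i}/d\mu_{t-1}$, which rearranges to the key algebraic identity
$$f = gh + (h-1).$$
Applying the triangle inequality in $L^2(\mu_t)$ then splits the problem into bounding $\norm{gh}_t$ and $\norm{h-1}_t$ separately.

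For the first term, the trick is to absorb one factor of $h$ by the sup norm and the other by change of measure:
$$\norm{gh}_t^2 = \int g^2 h^2 \, d\mu_t \leq \norm{h}_\K \int g^2 h \, d\mu_t = \norm{h}_\K \int g^2 \, d\mu_{t-1} \leq \beta_t \norm{g}_{t-1}^2,$$
yielding $\norm{gh}_t \leq \sqrt{\beta_t}\,\norm{g}_{t-1}$. This already supplies the $\sqrt{\beta_t}$ factor appearing in the second (sharper) form of the lemma; the $\beta_t^{3/2}$ factor of the first form is obtained by the cosmetic relaxation $\sqrt{\beta_t} \leq \beta_t^{3/2}$.

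For the second term, the two forms arise from two different bounds on $\norm{h-1}_t$. The crude pointwise estimate $1/\beta_t \leq h \leq \beta_t$ yields $|h-1|\leq \beta_t-1$ almost surely, hence $\norm{h-1}_t \leq \beta_t - 1 \leq \sqrt{\beta_t}(\beta_t-1)$, which establishes the first inequality. For the sharper second form, the key observation is that $h$ has mean $1$ under $\mu_t$, since $\int h \, d\mu_t = \int d\mu_{t-1} = 1$. Combining this with the pointwise bound $h \leq \beta_t$,
$$\norm{h-1}_t^2 = \int h^2 \, d\mu_t - 1 \leq \norm{h}_\K \int h \, d\mu_t - 1 \leq \beta_t - 1,$$
so $\norm{h-1}_t \leq \sqrt{\beta_t - 1}$, completing the second form.

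The only conceptual step is the identity $f = gh + (h-1)$ together with the choice to work in $L^2(\mu_t)$ rather than $L^2(\mu_{t-1})$; everything else is bookkeeping against the definition of $\beta_t$. No substantive obstacle is anticipated.
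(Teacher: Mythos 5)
Your proof is correct, and it takes a genuinely different route from the paper's. The paper's argument for the first inequality works with the quantity $d\sig{t}{i}/d\mu_t - 1$ in the fixed norm $\|\cdot\|_{t-1}$ (after paying a $\sqrt{\beta_t}$ change-of-measure factor up front), then applies the reverse triangle inequality to isolate $\|d\sig{t}{i}/d\mu_t - d\sig{t}{i}/d\mu_{t-1}\|_{t-1}$, which it bounds pointwise by $(\beta_t-1)\,d\sig{t}{i}/d\mu_{t-1}$; the paper's second inequality is obtained by a separate computation expanding the square of the $L^2$ norm and pulling out a factor of $\beta_t$ before using subadditivity of $\sqrt{\cdot}$. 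Your approach instead starts from the algebraic identity
\begin{align*}
\frac{d\sig{t}{i}}{d\mu_t} - 1 \;=\; h\left(\frac{d\sig{t}{i}}{d\mu_{t-1}} - 1\right) + (h-1),\qquad h \deq \frac{d\mu_{t-1}}{d\mu_t},
\end{align*}
applies the triangle inequality in $L^2(\mu_t)$ once, and then bounds each term with a single change of measure. This is cleaner and unifies both bounds under one decomposition, with only the estimate of $\|h-1\|_t$ differing between the two forms. It is also quantitatively sharper: for the first inequality you obtain $\sqrt{\beta_t}\,\|d\sig{t}{i}/d\mu_{t-1}-1\|_{t-1} + (\beta_t-1)$, which dominates the stated $\beta_t^{3/2}\,\|\cdot\|_{t-1} + \sqrt{\beta_t}(\beta_t-1)$ whenever $\beta_t\ge 1$ (as it always is, since a density ratio integrating to one against a probability measure has supremum at least one). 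Your use of $\int h\,d\mu_t = 1$ to get $\|h-1\|_t^2 = \int h^2\,d\mu_t - 1 \le \beta_t - 1$ is the same mean-one observation the paper exploits in its second derivation, just packaged differently.
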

\begin{proof}
	Let us use the shorthand $d\sigma = d\sig{t+1}{i}$ and $\beta=\beta_{t+1}$. Using \eqref{eq:bounded_ratio}, we may write
\begin{align*}
	\norm{\frac{d\sigma}{d \mu_{t+1}} - 1 }_{t+1} &\leq \sqrt{\beta} \norm{\frac{d\sigma}{d \mu_{t+1}} - 1 }_{t} \\
	&\leq \sqrt{\beta}\left( \left\|\frac{d\sigma}{d \mu_{t+1}} - 1 \right\|_{t} - \left\|\frac{d\sigma}{d\mu_t} - 1 \right\|_{t} +\left\|\frac{d\sigma}{d \mu_{t}} - 1 \right\|_{t} \right) \ .
\end{align*}
By the triangle inequality,
\begin{align*}
	\left\|\frac{d\sigma}{d \mu_{t+1}} - 1 \right\|_{t} - \left\|\frac{d\sigma}{d\mu_t} - 1 \right\|_{t} \leq   \left\|\frac{d\sigma}{d \mu_{t+1}} - \frac{d\sigma}{d\mu_t} \right\|_{t} \ .
\end{align*}
For any function $f : \K \ra \R$, let $f^+(x) = \max(0, f(x))$ and $f^-(x) = \min(0, f(x))$. In view of \eqref{eq:bounded_ratio},
\begin{align*}
	\left\|\frac{d\sigma}{d \mu_{t+1}} - \frac{d\sigma}{d\mu_t} \right\|_{t}^2 &= \left\|\left(\frac{d\sigma}{d \mu_{t+1}} - \frac{d\sigma}{d\mu_t}\right)^+ \right\|_{t}^2
+ \left\|\left(\frac{d\sigma}{d \mu_{t+1}} - \frac{d\sigma}{d\mu_t}\right)^- \right\|_{t}^2\\
&\leq \left\|\frac{d\sigma}{d \mu_{t}}(\beta-1) \mathbf{1}\left[1 < \frac{d\mu_t}{d\mu_{t+1}}\right] \right\|_{t}^2
+ \left\|\frac{d\sigma}{d \mu_{t}} \left(1-\frac{1}{\beta}\right) \mathbf{1}\left[1 \geq \frac{d\mu_t}{d\mu_{t+1}}\right]  \right\|_{t}^2\\
& \leq (\beta-1)^2  \left\|\frac{d\sigma}{d \mu_{t}} \right\|_{t}^2 \ .
\end{align*}
Therefore,
\begin{align*}
\left\|\frac{d\sigma}{d \mu_{t+1}} - 1 \right\|_{t} - \left\|\frac{d\sigma}{d\mu_t} - 1 \right\|_{t}
&\leq   (\beta-1)  \left\|\frac{d\sigma}{d \mu_{t}} \right\|_{t} \leq (\beta-1)\left(1 + \left\| \frac{d\sigma}{d\mu_t}  - 1 \right\|_{t} \right) \ .
\end{align*}
The first statement follows by rearranging the terms.

Alternatively, we can obtain an inequality that is slightly weaker for $\beta-1\approx 0$ and stronger for large $\beta$  by simply writing
\begin{align*}
	\norm{\frac{d\sigma}{d\mu_{t+1}}-1}_{t+1}^2 &= \int_{\K} \left( \frac{d\sigma}{d\mu_{t+1}} -1 \right)^2 d\mu_{t+1} = \int_{\K} \frac{d\sigma^2}{d\mu_{t+1}} -1 = \int_{\K} \frac{d\sigma^2}{d\mu_{t}^2} \frac{d\mu_{t}}{d\mu_{t+1}} d\mu_{t} -1 \ .
\end{align*}
Using $\beta$ as an upper bound on the one-sided change $\|d\mu_t/d\mu_{t+1}\|_\K$ leads to 
\begin{align*}
	&\beta\int_{\K} \frac{d\sigma^2}{d\mu_{t}^2} d\mu_{t} -1 = \beta\left\|\frac{d\sigma}{d\mu_{t}} -1\right\|^2_{t} + \beta-1
\end{align*}
and subadditivity of the square root function concludes the proof.
\end{proof}

\begin{proof}[\textbf{Proof of Theorem~\ref{thm:LVanalog}}]

	Given interior points $x,y$ in  $int(\K)$, suppose $p,q$ are the ends of the chord in $\K$ containing $x,y$ and $p,x,y,q$ lie in that order. Denote the \emph{cross ratio} by
	$$\sigma(x,y) = \frac{|x-y||p-q|}{|p-x||q-y|},$$ 
	and for two sets $S_1$ and $S_2$ let
	$$\sigma(S_1, S_2) \deq \inf_{x \in S_1, y \in S_2} \sigma(x, y).$$
	A result due to Lov\'{a}sz and Vempala \cite{LovVem07geometry} states the following. If $S_1$ and $S_2$ are measurable subsets of $\K$ and $\mu$ a probability measure supported on $\K$ that possesses a density whose logarithm is concave, then
	\begin{align*}
		\mu((\K \setminus S_1) \setminus S_2) \geq  \sigma(S_1, S_2) \mu(S_1) \mu(S_2).
	\end{align*}
	This is a non-trivial isoperimetric inequality which says that for any partition of the convex set $\K$ into $S_1,S_2$ and $S_3$, the ``volume'' of $S_3$ is large relative to that of $S_1$ and $S_2$ whenever $S_1$ and $S_2$ are separated. Given this isoperimetric result, to prove the theorem it only remains to show that the $\sigma$-distance can be lower bounded (up to a multiplicative constant) by the Riemannian metric $\rho$. The proof of this fact goes through the Hilbert (projective) metric, which is defined by 
	$$d_H(x, y) \deq \ln\left(1 + \sigma(x, y)\right).$$ 
	Further, for $x \in \K$ and a vector $v$, let 
	$$|v|_x \deq \sup\limits_{x \pm \alpha v \in \K} \alpha. $$ 
	The following two relations between the introduced notions hold. The first one (see Nesterov and Nemirovskii \cite[Theorem 2.3.2 (iii)]{NNbook}) is 
	\begin{align}
		\label{eq:mink_and_dikin_relation}
		|h|_x \leq \|h\|_x \leq 2(1 + 3 \nu)|h|_x
	\end{align}
	for all $h \in \R^d$ and $x \in int(\K)$, where $\nu$ is the self-concordance parameter of $F$. The second relation (see Nesterov and Todd \cite[Lemma 3.1]{NesterovTodd08}) states that 
		\begin{align}
			\label{1lNT}
			\|x - y\|_x - \|x - y\|_x^2 \leq \rho(x, y) \leq - \ln (1 -  \|x -y\|_x).
		\end{align}
		whenever $\|x - y\|_x < 1$.

	For any $z$ on the segment $\overline{xy}$ an easy  computation shows that $d_H(x, z) + d_H(z, y) = d_H(x, y)$. Therefore it suffices to prove the result infinitesimally. From \eqref{1lNT}, $\lim_{y \ra x}\frac{\rho(x, y)}{\|x - y\|_x} = 1,$ and a direct computation shows that
	 $$\lim_{y \ra x}\frac{d_H(x, y)}{|x - y|_x} = \lim_{y \ra x}\frac{\sigma(x, y)}{|x - y|_x} \geq 1.$$
	Hence, in view of \eqref{eq:mink_and_dikin_relation}, the Hilbert metric and the Riemannian metric satisfy
	$$\rho(x, y) \leq 2(1 + 3\nu)  d_H(x,y) .$$
	Using $\ln(1+x) \leq x$ concludes the proof.
\end{proof}

\begin{proof}[\textbf{Proof of Lemma~\ref{1lcond}}]
    The argument roughly follows the standard path, which is explained, for instance, in \cite{Vempala05survey}. Let $S_1$ be a measurable subset of $\K$ such that $\mu(S_1) \leq \frac{1}{2}$  and $S_2 = \K \setminus S_1$ be its complement.  Fix a $C>1$ and let 
	$$S_1' = S_1 \cap \{x \big| \Ps_x(S_2) \leq 1/C\} ~~\mbox{ and }~~
	S_2' = S_2 \cap \{y \big| \Ps_y(S_1) \leq 1/C\}.$$ 
	That is, points in the set $S_1'$ are unlikely to transition to the set $S_2$, and $S_2'$ is analogously unlikely to reach $S_1$ in one step. By the reversibility of the chain, which is easily checked,
 $$\int_{S_1}  \Ps_x(S_2) d\mu(x)  =  \int_{S_2}  \Ps_y(S_1) d\mu(y).$$
For any $x \in S_1'$ and $y \in S_2'$, $$d_{TV}(\Ps_x, \Ps_y) = 1 - \int_\K \min\left( \frac{d\Ps_x}{d \mu}(w), \frac{d\Ps_y}{d\mu}(w)\right)d\mu(w) \geq 1 - \frac{1}{C}.$$
That is, the transition probabilities for a pair in $S_1'$ and $S_2'$ must be dissimilar. But Lemma~\ref{1lem:gp} implies that if
$\rho(x, y) \leq \frac{ r}{C \sqrt{d}}$, then $d_{TV}(\Ps_x, \Ps_y) \leq 1 -  \frac{1}{C}$.
Therefore 
\begin{align*} 
	\rho(S'_1, S'_2) \geq \frac{r}{C \sqrt{d}}.
\end{align*}
We conclude that the sets $S'_1$ and $S'_2$ must be well-separated.
Therefore, the isoperimetric result of Theorem~\ref{thm:LVanalog} implies that 
\begin{align*}
	\mu((\K
\setminus S_1') \setminus S_2')  \geq \frac{\rho(S'_1, S'_2)}{2(1 + 3 \nu)} \min(\mu(S_1'), \mu(S_2')) \geq \frac{r}{C \nu \sqrt{d}} \min(\mu(S_1'), \mu(S_2')). \nonumber 
\end{align*} 
First suppose $ \mu(S_1') \geq (1 - \frac{1}{C})\mu(S_1)$ and $ \mu(S_2') \geq (1 - \frac{1}{C}) \mu(S_2)$. Then,
\begin{align*}
	\int_{S_1} \Ps_x(S_2) d\mu(x) &= \frac{1}{2}\int_{S_1} \Ps_x(S_2) d\mu(x) + \frac{1}{2}\int_{S_2} \Ps_x(S_1) d\mu(x) \\
	&\geq \frac{1}{2C}\mu((\K \setminus S_1') \setminus S_2') \\
	&\geq \frac{r}{2C^2\nu\sqrt{d}}\min(\mu(S_1'), \mu(S_2')) \\
	&\geq \frac{1-1/C}{2C^2} \frac{r}{\nu\sqrt{d}}\min(\mu(S_1), \mu(S_2)) ,
\end{align*}
proving the result. Otherwise, without loss of generality, suppose $\mu(S_1') \leq (1 - \frac{1}{C})\mu(S_1)$. Then 
\begin{align*}
	\int_{S_1}  \Ps_x(S_2) d\mu(x) &= \frac{1}{2}\int_{S_1} \Ps_x(S_2) d\mu(x) + \frac{1}{2}\int_{S_2} \Ps_x(S_1) d\mu(x) \\ 
	&\geq \frac{1}{2}\int_{S_1\setminus S'_1} \Ps_x(S_2) d\mu(x) \geq \frac{\mu(S_1)}{2C^2},
\end{align*}
concluding the proof.
\end{proof}

\begin{proof}[\textbf{Proof of Lemma~\ref{lem:l2change_bdd}}] The proof closely follows that in \cite{kalai2006simulated}. By definition,
	\begin{align*}
		\left\|d\mu_t/d\mu_{t+1}  \right\|_{t+1}^2 &= \int_{\K} \left(\frac{d\mu_t}{d\mu_{t+1}}\right)^2 d\mu_{t+1} = \int_{\K} \frac{d\mu_t^2}{d\mu_{t+1}} = \int_{\K} \frac{\exp\{-2s_t\} }{Z_t^2}\cdot \frac{Z_{t+1}}{\exp\{-s_{t+1}\}} \ .
	\end{align*}
	Writing out the normalization terms, 
	\begin{align*}
		\left\|d\mu_t/d\mu_{t+1}  \right\|_{t+1}^2 = \frac{\int_{\K} \exp\{-s_{t+1}\} \int_{\K} \exp\{s_{t+1}-2s_t\} }{\left(\int_{\K} \exp\{-s_{t}\}\right)^2} = \frac{Y(1)Y(-1+2(1-\delta))}{Y(1-\delta)Y(1-\delta)} 
	\end{align*}
	where $Y(a) = \int_{\K} \exp\{-a s_{t+1}\}$. As shown in \cite[Lemma 3.1]{kalai2006simulated}, the function $a^d Y(a)$ is log-concave in $a$, and thus
	$$\frac{Y(a)Y(b)}{Y\left(\frac{a+b}{2}\right)^2}\leq \left(\frac{\left(\frac{a+b}{2}\right)^2}{ab}\right)^d \ .$$
	Applying this inequality with $a=1$ and $b=-1+2(1-\delta)$,   
	\begin{align*}
		\left\|d\mu_t/d\mu_{t+1}  \right\|_{t+1}^2 \leq \left( 1+ \frac{\delta^2}{1-2\delta}\right)^d  \ .
	\end{align*}
	In particular, if $\delta\leq d^{-1/2} \leq 1/3$ (that is, $d> 8$), we obtain an upper bound of 
	$\exp\left\{\frac{d}{d-2\sqrt{d}}\right\} \leq 21$.
\end{proof}

\begin{proof}[\textbf{Proof of Lemma~\ref{lem:regret_randomized}}]
	\label{proof:regret_randomized}
	Observe that $D(\mu_{t-1} || \mu_t)$ can be written as
	\begin{align}
		\label{eq:divergence}
		\int_{\K} d\mu_{t-1} \log\frac{q_{t-1} Z_{t}}{Z_{t-1} q_{t}}
		= \log\frac{Z_{t}}{Z_{t-1}} + \int_{\K} \eta\loss_t (x) d\mu_{t-1}(x)
		= \log\frac{Z_{t}}{Z_{t-1}} + \eta\E\loss_t(Y_t).
	\end{align}
	Rearranging, canceling the telescoping terms, and using the fact that $Z_0=1$
	\begin{align*}
		\eta\E\sum_{t=1}^T \loss_t(Y_t) = \sum_{t=1}^T D(\mu_{t-1} || \mu_t) - \log{Z_{T}}.
	\end{align*}
	Let $U$ be a random variable with a probability distribution $p_U$. Then
	$$-\sum_{t=1}^T \E \loss_t(U) = \eta^{-1}\int_{\K} -\eta L_T(u) dp_U(u) = \eta^{-1}\int_{\K} dp_U(u)\log\frac{q_T(u)}{q_0(u)}  $$	
	Combining,
	\begin{align*}
		\E\left[\sum_{t=1}^T \loss_t(Y_t)-\sum_{t=1}^T \loss_t(U)\right] &= \eta^{-1}\int_{\K} dp_U(u)\log\frac{q_T(u)/Z_T}{q_0(u)}  + \eta^{-1}\sum_{t=1}^T D(\mu_{t-1} || \mu_t)\\
		&= \eta^{-1}\left(D(p_U||\mu_0)-D(p_U||\mu_T) \right) + \eta^{-1}\sum_{t=1}^T D(\mu_{t-1} || \mu_t).
	\end{align*}
	Now, from Eq.~\eqref{eq:divergence}, the KL divergence can be also written as
	\begin{align*}
		D(\mu_{t-1} || \mu_t) &= \log\frac{\int_\K e^{-\eta\loss_t(x)} q_{t-1} (x)dx}{\int_\K q_{t-1}(x)dx} + \eta\E\loss_t(Y_t)
		= \log \E e^{-\eta(\loss_t(Y_t)-\E\loss_t(Y_t))}
	\end{align*}
	By representing the divergence in this form, one can obtain upper bounds via known methods, such as \emph{log-Sobolev inequalities} (e.g. \cite{BouLugMas03}). In the simplest case of bounded loss, it is easy to show that $D(\mu_{t-1} || \mu_t) \leq O(\eta^2)$, and the particular constant $1/8$ can be obtained by, for instance, applying Lemma A.1 in \cite{CesaBianchiLugosi06book}. This proves the second part of the lemma.
\end{proof}


\section{Smooth Variation of the Transition Kernel}
\label{sec:variation}

In this section, we study the transition $x\to y$. For this purpose, it is enough to assume that $x$ is the origin and that the Dikin ellipsoid at $x$ is a unit Euclidean ball. This can be achieved by an affine transformation, leading to no loss of generality since the resulting statement about measures on $\K$ is invariant with respect to affine transformations. Hence, in what follows, for the particular $x$ we have $ <\cdot, \cdot>_x=<\cdot, \cdot>$ and $\|\cdot\|_x=\|\cdot\|$. Since $x$ is the origin, we have $\E\|z\|_x^2 = r^2$ for $z$ sampled from $\g^r_x$. Further, without loss of generality, we may also assume $s(x) = 0$. 

\begin{proof}[\textbf{Proof of Lemma~\ref{1lem:gp}}] 

In view of the first inequality in Eq.~\eqref{1lNT}, 
$$\|x - y\|_x - \|x - y\|_x^2 \leq \rho(x, y) \leq \frac{ r}{C \sqrt{d}}. $$
Without loss of generality, assume $\frac{r}{C\sqrt{d}} \leq \frac{1}{8}$. First, we claim that $\|x - y\|_x$ must be small. For the sake of contradiction, suppose $\|x - y\|_x > 1/2$ and consider a point $y'$ with $\|x-y'\|_x = 1/2$ and lying on the geodesic path between $x$ and $y$ with respect to the Riemannian metric. Clearly, $\rho(x,y')\leq \frac{r}{C \sqrt{d}} \leq \frac{1}{8}$, yet by Eq.~\eqref{1lNT} we have $\frac{1}{4} \leq \rho(x,y')$, contradicting our assumption. Hence, $\|x - y\|_x \leq 1/2$, and, therefore, $\|x - y\|_x \leq \frac{2r}{C \sqrt{d}}$.

It remains to show that if $x, y \in \K $ and 
$$\|x -y\|_x \leq \frac{2r}{C \sqrt{d}},$$ then $$d_{TV}(\Ps_x, \Ps_y) = 1 -  \frac{1}{C}.$$
By definition, we have that 
\begin{align*} 
	1 - d_{TV}(\Ps_x, \Ps_y) = \E_z\left[ \min\left\{ 1, \frac{\g^r_y(z)}{\g^r_x(z)},
\frac{\g^r_z(x)\exp(s(x))}{\g^r_x(z)\exp(s(z))}, \frac{\g^r_z(y)\exp(s(y))}{\g^r_x(z)\exp(s(z))}\right\}\right],
\end{align*}
    where the expectation is taken over a random point $z$ having density $\g^r_x$. 
	Thus, it suffices to prove that for some $C>1$
    $$\p\left[\min\left\{ \frac{\g^r_y(z)}{\g^r_x(z)},  \frac{\g^r_z(x)\exp(s(x))}{\g^r_x(z)\exp(s(z))}, \frac{\g^r_z(y)\exp(s(y))}{\g^r_x(z)\exp(s(z))} \right\} > \frac{1}{C}\right] \geq  \frac{1}{C}.$$
By our assumption, $x$ is the origin and $D^2F(x) = I$, the latter implying that $V(x)=0$. Thus,
$$ \frac{\g^r_y(z)}{\g^r_x(z)} = \exp\left\{ -\frac{d\|y-z\|^2_y}{r^2} + V(y) + \frac{d\|z\|^2}{r^2} \right\},$$
$$ \frac{\g^r_z(x)\exp(s(x))}{\g^r_x(z)\exp(s(z))} = \exp\left\{ -\frac{d\|z\|^2_z}{r^2} + V(z) + \frac{d\|z\|^2}{r^2} + (s(x)-s(z)) \right\},$$
and
$$ \frac{\g^r_z(y)\exp(s(y))}{\g^r_x(z)\exp(s(z))} = \exp\left\{ -\frac{d\|y-z\|_z^2}{r^2}+V(z) + \frac{d\|z\|^2}{r^2} + (s(y)-s(z))\right\} \ .$$
Thus, it remains to prove that there exists a constant $C$ such that
\begin{align*}
	&\p\Big[
	\max\Big\{
		 d\|y - z\|_y^2 - r^2V(y), ~~~~
		  d\|z\|_z^2 + r^2(s(z)-s(x)) - r^2V(z), \\
		&\hspace{2cm} d\|z - y\|_z^2 + r^2(s(z) - s(y)) - r^2V(z)
		\Big\} <
		d\|z\|^2 + r^2 C
	\Big]  \geq \frac{1}{C}.
\end{align*}
This fact is shown in technical Lemmas~\ref{lem:term1} and \ref{lem:term2} below.
\end{proof}

In proving the technical lemmas, we will use the fact that $\|x - y\|_x \leq \frac{2r}{C \sqrt{d}}$ as shown above, and that $\|x-z\|_x$ (for $z$ sampled from $\g^r_x$) is likely to be bounded above by a multiple of $r$ by straightforward concentration arguments.

\begin{lemma}
	There exists a constant $C>0$ such that
\begin{align*} 
	\p\left[\max\left(- V(y),  - V(z)\right) < C \right] >  0.9
\end{align*}
\end{lemma}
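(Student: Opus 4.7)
The plan is to use the standard self-concordance stability of the Hessian within the Dikin ellipsoid, which controls how fast $\det D^2 F$ can change, combined with a concentration argument for $\|z\|$.

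First, recall the key self-concordance fact (Nesterov-Nemirovskii): if $\|u-v\|_v < 1$, then
\[
(1 - \|u-v\|_v)^2\, D^2F(v) \;\preceq\; D^2F(u) \;\preceq\; \frac{1}{(1-\|u-v\|_v)^2}\, D^2F(v).
\]
Taking determinants, this gives $\det D^2F(u) \geq (1-\|u-v\|_v)^{2d} \det D^2F(v)$, hence
\[
V(u) - V(v) \;\geq\; d \ln\bigl(1 - \|u-v\|_v\bigr).
\]
Applying this at $v = x$ and using the normalization $D^2F(x) = I$ (so $V(x) = 0$ and $\|\cdot\|_x = \|\cdot\|$), I get
\[
-V(u) \;\leq\; -d\ln(1-\|u\|) \;\leq\; \frac{d\,\|u\|}{1-\|u\|}
\]
for any $u$ with $\|u\| < 1$.

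For $u = y$: by the earlier part of the proof of Lemma~\ref{1lem:gp} we already have $\|y-x\|_x = \|y\| \leq \tfrac{2r}{C\sqrt{d}}$. Since $r \leq 1/d$, this bound is at most $\tfrac{2}{C d^{3/2}}$, which is far less than $1/2$ for any reasonable $C$, and the inequality above immediately yields $-V(y) \leq O(1/(C\sqrt{d})) \leq 1$, deterministically.

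For $u = z \sim \g^r_x$: since $\g^r_x$ is a Gaussian with density proportional to $\exp(-d\|z\|^2/r^2)$, each coordinate of $z$ has variance $O(r^2/d)$, so by standard Gaussian tail bounds (e.g., the Hanson-Wright / $\chi^2$ concentration inequality applied to $\|z\|^2$, whose mean is of order $r^2$), there is an absolute constant $C_1$ such that $\Pr[\|z\| \leq C_1 r] \geq 0.9$. On that event, since $r \leq 1/d$, we have $\|z\| \leq C_1/d < 1/2$, and the displayed bound above gives $-V(z) \leq \tfrac{d \cdot C_1 r}{1 - C_1 r} \leq 2 C_1$, a universal constant.

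Combining the deterministic bound on $-V(y)$ with the high-probability bound on $-V(z)$ yields $\max(-V(y), -V(z)) < C$ with probability at least $0.9$ for a suitable absolute constant $C$. The only step requiring any care is the concentration of $\|z\|$; everything else is a direct application of the Dikin ellipsoid stability property. There is no substantial obstacle here — this lemma is essentially a quantitative restatement of the fact that $V$ varies slowly over distances small compared to the Dikin ellipsoid, which is exactly the regime in which $y$ lies by hypothesis and $z$ lies with high probability.
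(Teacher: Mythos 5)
Your proof is correct and follows essentially the same route as the paper's: apply the self-concordance Hessian stability bound (Eq.~\eqref{eq:hessian_similar}) to control $\det D^2F(u)$ for $u$ in a small ball around the origin, note that $\|y\|$ is deterministically $O(r/\sqrt{d})$ from the first part of the proof of Lemma~\ref{1lem:gp}, and use Gaussian concentration to show $\|z\| = O(r)$ with probability $\geq 0.9$, after which $r \leq 1/d$ makes both points land in a ball where $-V$ is $O(1)$. Your version merely spells out the elementary inequality $-d\ln(1-\|u\|) \leq d\|u\|/(1-\|u\|)$ that the paper leaves implicit in the phrase ``can decrease from~1 by at most a constant factor.''
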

\begin{proof}
	Fix a constant $c$. First, notice that over a Euclidean ball of radius $c/d$ around the origin, the Hessians $D^2 F(u)$ are lower-bounded by a factor  of $(1-c/d)^2$ from the Hessian at the origin (the identity) by \eqref{eq:hessian_similar}. Hence, the determinant function can decrease from 1 by at most a constant factor. Thus $-V(u) < C'$ for some constant $C'$ for any $u$ with $\|x-u\|_x \leq c/d$. Now recall that $y$ is deterministically within the $1/d$ ball, while $z$ is in the ball of radius $c/d$ with high probability.
\end{proof}

\begin{lemma}
	Under step size Condition~\ref{cond}, for any 
	\label{lem:term1}
$$\p\left[\max\Big\{ s(z)-s(x), s(z) - s(y) \Big\} <  C\right] 
 > 0.32.$$
\end{lemma}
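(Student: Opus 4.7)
The plan is to invoke Condition~\ref{cond} to replace $s$ on the bulk of the Gaussian mass of $\g^r_x$ by the linear function $\inner{g,\cdot}$, and then exploit the symmetry of $\g^r_x$ about $x=0$ via a Gaussian tail estimate. Recall that by the setup of Section~\ref{sec:variation}, we have normalized $s(x)=0$ and $D^2F(x)=I$, so $\|\cdot\|_x$ is the Euclidean norm and $z$ is a centered Gaussian with covariance $\tfrac{r^2}{2d}I$.

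I would first define the event
$$E \deq \{\|z-x\|_z \leq C'r \text{ and } \|z-y\|_z \leq C'r\},$$
on which Condition~\ref{cond} may be applied to the pairs $(z,x)$ and $(z,y)$. Gaussian concentration shows $\|z-x\|_x=\|z\|$ concentrates at scale $r$, and the self-concordance comparison $(1-\|z-x\|_x)^2 D^2F(x)\preceq D^2F(z)\preceq (1-\|z-x\|_x)^{-2} D^2F(x)$ for $\|z-x\|_x<1$ shows that $\|\cdot\|_z$ and $\|\cdot\|_x$ agree up to a constant factor on this high-probability set. Combined with $\|x-y\|_x\leq 2r/(C\sqrt d)$ from the proof of Lemma~\ref{1lem:gp} and the triangle inequality, $\|z-y\|_z$ is controlled too, so choosing $C'$ large enough yields $\p[E]\geq 0.9$.

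On $E$, two applications of Condition~\ref{cond} give
$$s(z)-s(x)\leq \inner{g,z}+C \quad\text{and}\quad s(z)-s(y) \leq \inner{g,z}-\inner{g,y}+C,$$
whence
$$\max\{s(z)-s(x),\,s(z)-s(y)\}\leq W+\max\{0,-\inner{g,y}\}+C,$$
where $W\deq\inner{g,z}$ is centered Gaussian with variance $\sigma^2=\|g\|^2 r^2/(2d)$. The deterministic shift is controlled by the dimensionless bound $|\inner{g,y}|/\sigma \leq \|g\|\,\|y\|_x/\sigma = \sqrt{2d}\,\|y\|_x/r \leq 2\sqrt 2/C$, a tunable small constant set by the choice of $C$ in Lemma~\ref{1lem:gp}. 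By symmetry of $W$ about the origin, $\p[W\leq\min\{0,\inner{g,y}\}]=\Phi(-|\inner{g,y}|/\sigma)$, which is arbitrarily close to $1/2$; intersecting with $E$ yields the target probability at least $\Phi(-|\inner{g,y}|/\sigma)-\p[E^c]>0.32$ for appropriately large constants.

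The main obstacle is the careful joint tuning of three constants: the constant $C$ of Lemma~\ref{1lem:gp} governing closeness of $x$ and $y$, the constant $C'$ for the local Dikin radius in Condition~\ref{cond}, and the linearization-error constant inside Condition~\ref{cond}. They must be chosen consistently so that the self-concordance inflation from $\|\cdot\|_x$ to $\|\cdot\|_z$ is absorbed into $C'$, and so that $\Phi(-|\inner{g,y}|/\sigma)-\p[E^c]$ still clears $0.32$. A secondary subtlety is that Condition~\ref{cond} does not bound $\|g\|$ explicitly; the argument sidesteps this by working only with the scale-invariant ratio $|\inner{g,y}|/\sigma$, whose numerator and denominator both scale linearly with $\|g\|$.
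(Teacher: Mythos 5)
Your proof follows the same route as the paper's: condition on a high-probability event where Condition~\ref{cond} applies to the pairs $(z,x)$ and $(z,y)$, linearize $s$ there using the surrogate $\inner{g,\cdot}$, and reduce to the Gaussian event $\{\inner{g,z}\leq\min\{0,\inner{g,y}\}\}$, bounded via Cauchy--Schwarz on $|\inner{g,y}|$ and the estimate $\|y\|=\|x-y\|_x\leq 2r/(C\sqrt d)$. You are in fact slightly more careful than the paper on the constants: the paper does not explicitly subtract the failure probability of the linearization event, and it bounds $|\inner{g,y}|/\sigma$ only by $1$, leading it to quote $\mathrm{erfc}(1/\sqrt2)$ (which, given the $\mathrm{erfc}$ normalization it itself writes down, is off by a factor of two from the actual $\Phi(-1)\approx0.159$). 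Your observation that $|\inner{g,y}|/\sigma\leq 2\sqrt2/C$ is tunable, pushing $\Phi(-|\inner{g,y}|/\sigma)$ toward $1/2$, leaves room to absorb $\p[E^c]$ and is the cleaner way to clear $0.32$. One tiny nit: $\p[W\leq\min\{0,\inner{g,y}\}]$ equals $\Phi(-|\inner{g,y}|/\sigma)$ only when $\inner{g,y}\leq0$; in general it is $\geq$ that quantity, which is all you need.
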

\begin{proof}
Since with large enough probability $\|x-y\|_x< C'r$ and $\|x-z\|_x< C'r$, we also have $\|z-y\|_x < 2C'r$. Then, by \eqref{eq:hessian_similar}, the norms at $z$ and $x$ are within a multiplicative constant, and thus the pairs $(z,x)$ and $(z,y)$ are subject to the step size choice specified in the condition. That is, there exists a $g$ such that
$$s(z)-s(x) = s(z)- s(x)-\inner{g,z-x} + \inner{g, z-x} \leq C + \inner{g, z-x}$$
and similarly
\begin{align*}
	s(z)-s(y) = s(z)- s(y)-\inner{g,z-y} + \inner{g, z-y} \leq C + \inner{g, z-y}
\end{align*}
Then, assuming (without loss of generality) $x=0$,
$$\Prob{\max\left\{\inner{g,z-x}, \inner{g,z-y}\right\} <  0}  = \Prob{\inner{ g, z} \leq \min\left\{ 0,  \inner{g, y}\right\} }.$$
Observe that $\inner{g, z} $ is a Gaussian random variable whose standard deviation is larger than
$\left\|g \right\|\| y\|.$
Therefore,
$$\p\left[\inner{ g, z} \leq \min\left\{ 0,  \inner{g, y} \right\}\right] \geq \mathrm{erfc}\left(1/\sqrt{2}\right) > 0.32,$$
where $\mathrm{erfc}(x) \deq \frac{2}{\sqrt{\pi}}\int_x^\infty e^{-t^2}dt$ is the usual complementary error function.

\end{proof}
The following probabilistic upper bound completes the proof.

\begin{lemma}
	\label{lem:term2} There exists a constant $C>0$ such that
	\begin{align*} 
		\p\left[\max\Big\{ \|y - z\|_y^2,  \|z
\|_z^2 ,  \|z - y\|_z^2 \Big\} - \|z\|^2 < \frac{Cr^2}{d}\right] > 0.9
	\end{align*}
\end{lemma}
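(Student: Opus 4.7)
The plan is to compare each of the three Dikin squared-norms $\|y-z\|_y^2$, $\|z\|_z^2$, $\|z-y\|_z^2$ to the Euclidean norm $\|z\|^2$ (which equals $\|z\|_x^2$ since $D^2F(x)=I$) by using the standard self-concordance inequality \eqref{eq:hessian_similar} linking $D^2F(u)$ to $D^2F(x)$ for points $u$ lying in the Dikin ball at $x$. The three ingredients are: a deterministic control of $y$, a high-probability control of $z$, and a high-probability control of $\langle z,y\rangle$.

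First I would establish three good events. Since $z\sim\g^r_x$ is a centered Gaussian with covariance $(r^2/(2d))\,I$, standard Chi-squared tail bounds give $\|z\|\leq c_1 r$ with probability at least $0.95$ for some absolute constant $c_1$. Since $r\leq 1/d$, this in particular forces $\|z\|<1$ so that \eqref{eq:hessian_similar} is applicable at $z$. The deterministic bound $\|y\|=\|x-y\|_x\leq 2r/(C\sqrt{d})$ already established in the proof of Lemma~\ref{1lem:gp} gives $\|y\|=O(r/\sqrt{d})$, in particular $\|y\|<1$. Finally, $\langle z,y\rangle$ is a mean-zero Gaussian of standard deviation $\|y\|\,r/\sqrt{2d}=O(r^2/d)$, so a standard one-dimensional Gaussian tail bound gives $|\langle z,y\rangle|\leq c_2\, r^2/d$ with probability at least $0.95$.

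Next, by the self-concordance inequality \eqref{eq:hessian_similar} applied at $x$ (where the Hessian is the identity), for any $u$ with $\|u\|<1$ and any vector $h$ one has $\|h\|_u^2\leq (1-\|u\|)^{-2}\|h\|^2$. Applied to $u=y$ and $u=z$ this yields
\begin{align*}
\|y-z\|_y^2 &\leq (1-\|y\|)^{-2}\|y-z\|^2, \\
\|z\|_z^2 &\leq (1-\|z\|)^{-2}\|z\|^2, \\
\|z-y\|_z^2 &\leq (1-\|z\|)^{-2}\|z-y\|^2.
\end{align*}
On the good event, $(1-\|y\|)^{-2}=1+O(r/\sqrt{d})$ and $(1-\|z\|)^{-2}=1+O(r)$, and expanding gives
$$\|y-z\|^2 = \|z\|^2-2\langle z,y\rangle+\|y\|^2 = \|z\|^2+O(r^2/d).$$

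Finally, combining these expansions,
$$\|y-z\|_y^2 \leq \bigl(1+O(r/\sqrt{d})\bigr)\bigl(\|z\|^2+O(r^2/d)\bigr) = \|z\|^2 + O\!\left(\tfrac{r}{\sqrt{d}}\|z\|^2 + \tfrac{r^2}{d}\right),$$
and similarly for the other two terms with $(1-\|z\|)^{-2}$. On the good event $\|z\|^2\leq c_1^2 r^2$, so the cross term is $O(r^3/\sqrt{d})$ or $O(r^3)$; using $r\leq 1/d$, both of these are $O(r^2/d)$. Hence each of the three quantities is bounded above by $\|z\|^2+Cr^2/d$ for an absolute constant $C$. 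A union bound over the two probabilistic events gives overall probability at least $0.9$, which is the claim.

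The main work is simply bookkeeping: one must verify that the self-concordance comparison applies (which needs $\|z\|<1$, furnished by concentration together with $r\leq 1/d$) and that the two error terms $\|y\|^2$ and $|\langle z,y\rangle|$ in the expansion of $\|y-z\|^2$, as well as the multiplicative $(1\pm\|y\|)^{-2}$ and $(1\pm\|z\|)^{-2}$ corrections, all absorb into a single $O(r^2/d)$ term once the step-size restriction $r\leq 1/d$ is used. There is no genuine conceptual obstacle beyond this; the lemma essentially reduces to Gaussian tail bounds together with a single application of the Hessian-comparison form of self-concordance.
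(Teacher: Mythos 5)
Your proof is correct, and it takes a genuinely different route from the paper's. The paper expands $\|y-z\|_y^2$ (and $\|z-y\|_z^2$) using bilinearity of the \emph{local} inner product $\inner{\cdot,\cdot}_y$, producing cross terms $\inner{y,z}_y$ and $\inner{y,z}_z$, and then controls those by the mean-value theorem together with the third-derivative self-concordance bound \eqref{eq:trice_upper}, plus a measure-concentration estimate for the Euclidean $\inner{y,z}$. You instead bound the entire Dikin squared-norm in one stroke via the Hessian comparison \eqref{eq:hessian_similar}, e.g.\ $\|y-z\|_y^2\leq (1-\|y\|)^{-2}\|y-z\|^2$, and do all the expansion in the ordinary Euclidean metric, where the cross term $\langle y,z\rangle$ is a scalar Gaussian with standard deviation $O(r^2/d)$. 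This bypasses the $D^3F$ estimate and the MVT step entirely: you need only the Hessian-comparison form of self-concordance plus one-dimensional Gaussian concentration. Both arguments land on the same $O(r^2/d)$ budget after absorbing the multiplicative corrections $(1-\|y\|)^{-2}$ and $(1-\|z\|)^{-2}$ using $r\leq 1/d$, and your route is arguably the cleaner of the two. (One minor caveat you already implicitly handle: the Hessian comparison at $z$ requires $\|z\|<1$ on the good event, furnished by $\|z\|\leq c_1 r\leq c_1/d$; the paper faces the same requirement and uses the threshold $\|z\|<1/2$.) Incidentally, by working with $|\langle y,z\rangle|$ rather than the one-sided $\inner{y,z}_y$, you sidestep the sign bookkeeping in the paper's reduction, where the cross term actually appears with a minus sign and is handled only implicitly by the symmetry of the Gaussian.
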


\begin{proof}[\textbf{Proof of Lemma~\ref{lem:term2}}]
Since $\|y\| < \frac{Cr}{\sqrt{d}}$, $\|y\|_y$ and $\|y\|_z$ are less than $\frac{Cr}{\sqrt{d} }$. So it suffices to show that
\begin{align*}
	\p\left[\max\Big\{ \|z\|_y^2 - \|z\|^2, \|z \|_z^2 - \|z\|^2,  \inner{y, z}_y, \inner{y, z}_z \Big\}  < \frac{C r^2}{d} \right] & > 0.9
\end{align*} 
We proceed to do so by proving probabilistic upper bounds on each of the terms
$$
\mbox{(a) } \|z\|_y^2 - \|z\|^2 \ , ~~~~\mbox{ (b) } \|z \|_z^2 - \|z\|^2 \ , ~~~~~\mbox{ (c) } \inner{y, z}_y \ ,  ~~~ \mbox{ and ~~~~ (d) } \inner{y, z}_z
$$
separately, and finally applying the union
bound. We first prove an upper bound on (a) and (b). Note that $r \leq \frac{1}{d}$ and thus $r^3 \leq \frac{r^2}{d}$. It suffices to observe that by \eqref{eq:hessian_similar} 
	 $$\|z \|_z^2 - \|z\|^2 \leq \left(\left(\frac{1}{1-\|z\|}\right)^2 - 1\right)\|z\|^2 \leq 8\|z\|^3,$$ whenever $\|z\| < 1/2$. 
	Similarly, for $\|y\|< 1/2$, 
	$$\|z \|_y^2 - \|z\|^2 \leq \left(\left(\frac{1}{1-\|y\|}\right)^2 - 1\right)\|z\|^2 \leq 8\|z\|^3.$$
	There exists a constant $C$ such that the quantity $\|z\|^3$ is bounded by $Cr^3$ with probability at least $0.99$.
	
	We now turn to bounding (c) and (d). Let $[0, u]$ denote the line segment between the origin and $u$. By the mean-value theorem,
	\begin{align*} 
		\inner{y, z}_y &= \inner{y, z} + (\inner{y,z}_y - \inner{y, z}) \leq \inner{y, z} + \sup\limits_{y' \in [0, y]} D^3F(y')[y, y, z]
	\end{align*}
	\begin{align*} 
		\inner{y, z}_z & = \inner{y, z} + (\inner{y,z}_z - \inner{y, z}) \leq  \inner{y, z} + \sup\limits_{z' \in [0, z]} D^3F(z')[y, z, z] 
	\end{align*}
	Observe that 
	$$\inner{y,z}\leq \frac{C\|y\| \|z\|}{\sqrt{d}}$$
	with probability at least $0.99$ by a measure-concentration argument. Indeed, most of the vectors $z$ are almost perpendicular to the given vector $y$. 
	Now, using \eqref{eq:trice_upper},
	$$\sup\limits_{y' \in [0, y]} D^3F(y')[y, y, z]  \leq \sup\limits_{y' \in [0, y]} 2\|y\|^2_{y'} \|z\|_{y'} \leq \frac{Cr^2}{d}$$
	and
	$$\sup\limits_{z' \in [0, z]} D^3F(z')[y, z, z]  \leq \sup\limits_{z' \in [0, z]} 2\|y\|_{z'}\|z\|^2_{z'} \leq \frac{Cr^3}{\sqrt{d}}\leq \frac{Cr^2}{d}$$ 
	with probability at least $0.99$. Therefore, there exists a constant $C>0$ such that
	\begin{align*}
		\p\left[  \inner{y,z}_y < \frac{Cr^2}{d} \right]  >  0.98
	\end{align*}
	and the same statement holds for $\inner{y,z}_z$. We also have that
		$$\p\left[\frac{\|y\| \|z\|}{\sqrt{d}} + \sup_{z' \in [0, z]} 2\|y\|_{z'}\|z\|_{z'}^2  \leq \frac{C r^2}{d}\right] > 0.99 $$ Therefore, 
		\begin{align*}
			\p\left[\inner{y,z}_z  < \frac{C r^2}{d} \right] >  0.98.
		\end{align*}
\end{proof}

\section{Self-concordant barriers}
\label{sec:self_conc_def}

Let $\K$ be a convex subset of $\R^d$ that is not contained in any $(d-1)$-dimensional affine subspace and $int(\K)$ denote its interior.
 Following Nesterov and Nemirovskii, we call a real-valued function $F:int(\K) \ra \R$,  a
regular self-concordant barrier if it satisfies the conditions stated below. For convenience, if $x \not \in int(\K)$, we define $F(x) = \infty$.
\begin{enumerate} \item (Convex, Smooth) $F$ is a convex thrice continuously differentiable function on $int(\K)$. \item (Barrier) For every
sequence of points $\{x_i\} \in int(\K)$ converging to a point $x \not \in int(\K)$,  $\lim_{i \ra \infty} f(x_i) = \infty$. \item (Differential
Inequalities)    For all $h \in \R^d$ and all $x \in int(\K)$, the following inequalities hold.

\begin{enumerate} \item $D^2 F(x)[h, h]$ is $2$-Lipschitz continuous with respect to the local norm, which is equivalent to
$$D^3 F(x)[h, h, h] \leq 2 (D^2 F(x)[h, h])^{\frac{3}{2}}.$$ \item $F(x)$ is $\nu$-Lipschitz continuous with respect to the local norm defined by $F$,
$$|D F(x)[h]|^2 \leq \nu D^2 F(x)[h, h].$$ We call the smallest positive integer $\nu$ for which this holds, \emph{the self-concordance  parameter} of the barrier. 
\end{enumerate} 
\end{enumerate}
The following results can be found, for instance, in \cite{NNbook,Nemirovski04lectures,NemTod08}. First, 
\begin{align}
	\label{eq:trice_upper}
	|D^3 F(x)[h_1, \dots, h_k]|  \leq  2\|h_1\|_x \|h_2\|_x \|h_3\|_x \ .
\end{align}
Second, if $\delta = \|h\|_x <1$, then
\begin{align}
	\label{eq:hessian_similar}
	(1-\delta)^2 D^2 F(x) \preceq D^2 F(x+h) \preceq (1-\delta)^{-2} D^2 F(x) \ .
\end{align}

\bibliographystyle{plain} 
\bibliography{regret_by_sampling}

\begin{thebibliography}{10}

\bibitem{AbeHazRak08colt}
J.~Abernethy, E.~Hazan, and A.~Rakhlin.
\newblock Competing in the dark: An efficient algorithm for bandit linear
  optimization.
\newblock In {\em Proceedings of The Twenty First Annual Conference on Learning
  Theory}, 2008.

\bibitem{AzouryWarmuth01}
K.~S. Azoury and M.~K. Warmuth.
\newblock Relative loss bounds for on-line density estimation with the
  exponential family of distributions.
\newblock {\em Machine Learning}, 43(3):211--246, June 2001.

\bibitem{BecTeb03}
A.~Beck and M.~Teboulle.
\newblock Mirror descent and nonlinear projected subgradient methods for convex
  optimization.
\newblock {\em Oper. Res. Lett.}, 31(3):167--175, 2003.

\bibitem{Blackwell56}
D.~Blackwell.
\newblock An analog of the minimax theorem for vector payoffs.
\newblock {\em Pac. J. Math.}, 6:1--8, 1956.

\bibitem{BouLugMas03}
S.~Boucheron, G.~Lugosi, and P.~Massart.
\newblock Concentration inequalities using the entropy method.
\newblock {\em Annals of Probability}, 31:1583--1614, 2003.

\bibitem{CapRak06jmlr}
A.~Caponnetto and A.~Rakhlin.
\newblock Stability properties of empirical risk minimization over {D}onsker
  classes.
\newblock {\em Journal of Machine Learning Research}, 6:2565--2583, 2006.

\bibitem{CesaBianchiLugosi06book}
N.~Cesa-Bianchi and G.~Lugosi.
\newblock {\em Prediction, Learning, and Games}.
\newblock Cambridge University Press, 2006.

\bibitem{chopin2002sequential}
N.~Chopin.
\newblock A sequential particle filter method for static models.
\newblock {\em Biometrika}, 89(3):539--552, 2002.

\bibitem{Cover65}
T.~Cover.
\newblock Behaviour of sequential predictors of binary sequences.
\newblock In {\em Proc. 4th Prague Conf. Inform. Theory, Statistical Decision
  Functions, Random Processes}, 1965.

\bibitem{damien2001sampling}
P.~Damien and S.~Walker.
\newblock Sampling truncated normal, beta, and gamma densities.
\newblock {\em Journal of Computational and Graphical Statistics}, 10(2), 2001.

\bibitem{devroye1986book}
L.~Devroye.
\newblock {\em Non-uniform random variate generation (1986)}.
\newblock Springer Verlag, 1986.

\bibitem{diaconis2009markov}
P.~Diaconis.
\newblock The markov chain monte carlo revolution.
\newblock {\em Bulletin of the American Mathematical Society}, 46(2):179--205,
  2009.

\bibitem{doucet2001sequential}
A.~Doucet, N.~De~Freitas, N.~Gordon, et~al.
\newblock {\em Sequential Monte Carlo methods in practice}, volume~1.
\newblock Springer New York, 2001.

\bibitem{DyeFriKan91}
M.~Dyer, A.~Frieze, and R.~Kannan.
\newblock A random polynomial-time algorithm for approximating the volume of
  convex bodies.
\newblock {\em Journal of the ACM (JACM)}, 38(1):1--17, 1991.

\bibitem{fraley2002model}
C.~Fraley and A.~Raftery.
\newblock Model-based clustering, discriminant analysis, and density
  estimation.
\newblock {\em Journal of the American Statistical Association},
  97(458):611--631, 2002.

\bibitem{frieze1994sampling}
A.~Frieze, R.~Kannan, and N.~Polson.
\newblock Sampling from log-concave distributions.
\newblock {\em The Annals of Applied Probability}, pages 812--837, 1994.

\bibitem{gilks1992adaptive}
W.~Gilks and P.~Wild.
\newblock Adaptive rejection sampling for gibbs sampling.
\newblock {\em Applied Statistics}, pages 337--348, 1992.

\bibitem{Hannan57}
J.~Hannan.
\newblock Approximation to {B}ayes risk in repeated play.
\newblock {\em Contributions to the Theory of Games}, 3:97--139, 1957.

\bibitem{KakadeNg05}
S.~Kakade and A.~Ng.
\newblock Online bounds for {B}ayesian algorithms.
\newblock In {\em Proceedings of Neural Information Processing Systems (NIPS
  17)}, 2005.

\bibitem{kalai2006simulated}
A.T. Kalai and S.~Vempala.
\newblock Simulated annealing for convex optimization.
\newblock {\em Mathematics of Operations Research}, 31(2):253--266, 2006.

\bibitem{kannan2012random}
R.~Kannan and H.~Narayanan.
\newblock Random walks on polytopes and an affine interior point method for
  linear programming.
\newblock {\em Mathematics of Operations Research}, 37(1):1--20, 2012.

\bibitem{LitWar94}
N.~Littlestone and M.~K. Warmuth.
\newblock The weighted majority algorithm.
\newblock {\em Information and Computation}, 108(2):212--261, 1994.

\bibitem{Lovasz}
L.~Lov{\'a}sz.
\newblock {Hit-and-run mixes fast}.
\newblock {\em Mathematical Programming}, 86(3):443--461, 1999.

\bibitem{LovSim93}
L.~Lov{\'a}sz and M.~Simonovits.
\newblock {Random walks in a convex body and an improved volume algorithm}.
\newblock {\em Random Structures and Algorithms}, 4(4):359--412, 1993.

\bibitem{LovVem06simulated}
L.~Lov\'{a}sz and S.~Vempala.
\newblock Simulated annealing in convex bodies and an $o^*(n^4)$ volume
  algorithm.
\newblock {\em J. Comput. Syst. Sci.}, 72(2):392--417, 2006.

\bibitem{LovVem07geometry}
L.~Lov\'{a}sz and S.~Vempala.
\newblock The geometry of logconcave functions and sampling algorithms.
\newblock {\em Random Struct. Algorithms}, 30(3):307--358, 2007.

\bibitem{mclachlan2000finite}
GJ~McLachlan and D~Peel.
\newblock Finite mixture models.
\newblock 2000.

\bibitem{meyn2009markov}
S.~Meyn and R.~L. Tweedie.
\newblock {\em Markov chains and stochastic stability}.
\newblock Cambridge University Press, 2009.

\bibitem{NemTod08}
A.~S. Nemirovski and M.~J. Todd.
\newblock Interior-point methods for optimization.
\newblock {\em Acta Numerica}, pages 191---234, 2008.

\bibitem{Nemirovski04lectures}
A.~S. Nemirovskii.
\newblock Interior point polynomial time methods in convex programming, 2004.

\bibitem{NNbook}
Y.~E. Nesterov and A.~S. Nemirovskii.
\newblock {\em Interior Point Polynomial Algorithms in Convex Programming}.
\newblock SIAM, Philadelphia, 1994.

\bibitem{NesterovTodd08}
Y.E. Nesterov and M.~J. Todd.
\newblock {On the Riemannian geometry defined by self-concordant barriers and
  interior-point methods}.
\newblock {\em Foundations of Computational Mathematics}, 2(4):333--361, 2008.

\bibitem{ollivier2009ricci}
Y.~Ollivier.
\newblock Ricci curvature of markov chains on metric spaces.
\newblock {\em Journal of Functional Analysis}, 256(3):810--864, 2009.

\bibitem{lecturenotes08}
A.~Rakhlin.
\newblock Lecture notes on online learning, 2008.
\newblock {\scriptsize
  http://stat.wharton.upenn.edu/\verb+~+rakhlin/papers/online\_learning.pdf}.

\bibitem{RakCap06nips}
A.~Rakhlin and A.~Caponnetto.
\newblock Stability of {$K$}-means clustering.
\newblock In {\em Advances in Neural Information Processing Systems 19}, pages
  1121--1128. MIT Press, 2006.

\bibitem{robert1995simulation}
C.~P. Robert.
\newblock Simulation of truncated normal variables.
\newblock {\em Statistics and computing}, 5(2):121--125, 1995.

\bibitem{robert2004monte}
C.~P. Robert and G.~Casella.
\newblock {\em Monte Carlo statistical methods}, volume 319.
\newblock 2004.

\bibitem{Vempala05survey}
S.~Vempala.
\newblock Geometric random walks: A survey.
\newblock {\em In Combinatorial and computational geometry. Math. Sci. Res.
  Inst. Publ}, 52:577--616, 2005.

\bibitem{Vovk90}
V.~Vovk.
\newblock Aggregating strategies.
\newblock In {\em Proceedings of the Third Annual Workshop on Computational
  Learning Theory}, pages 372--383. Morgan Kaufmann, 1990.

\bibitem{Vovk01}
V.~Vovk.
\newblock Competitive on-line statistics.
\newblock {\em International Statistical Review}, 69:213--248, 2001.

\bibitem{walther2009inference}
G.~Walther.
\newblock Inference and modeling with log-concave distributions.
\newblock {\em Statistical Science}, pages 319--327, 2009.

\bibitem{Yamanishi98}
K.~Yamanishi.
\newblock Minimax relative loss analysis for sequential prediction algorithms
  using parametric hypotheses.
\newblock In {\em COLT' 98}, pages 32--43, New York, NY, USA, 1998. ACM.

\end{thebibliography}

\end{document}